\documentclass{article}

\usepackage[nonatbib, preprint]{neurips_2026}
\usepackage[numbers]{natbib}
\usepackage[utf8]{inputenc} % allow utf-8 input
\usepackage[T1]{fontenc}    % use 8-bit T1 fonts
\usepackage{hyperref}       % hyperlinks
\usepackage{url}            % simple URL typesetting
\usepackage{booktabs}       % professional-quality tables
\usepackage{amsfonts}       % blackboard math symbols
\usepackage{nicefrac}       % compact symbols for 1/2, etc.
\usepackage{microtype}      % microtypography
\usepackage{xcolor}         % colors
\usepackage{float}
\usepackage{multirow}
\usepackage{wrapfig}

\usepackage{amsmath}

\DeclareMathOperator*{\argmin}{arg\,min}
\newcommand{\norm}[1]{\left\lVert#1\right\rVert}
\newcommand{\vect}[1]{\mathbf{#1}}

\newcommand{\vm}{\vect{m}}

\newcommand{\vx}{\vect{x}}

\newcommand{\vb}{\vect{b}}

\newcommand{\vw}{\vect{w}}

\newcommand{\vq}{\vect{q}}
\newcommand{\vl}{\boldsymbol{\ell}}
\newcommand{\vhw}{\hat{\vw}}
\newcommand{\vbw}{\bar{\vw}}

\newcommand{\R}{\mathbb{R}}

\usepackage{amsmath}
\usepackage{amssymb}
\usepackage{mathtools}
\usepackage{amsthm}
\usepackage{algorithm}
\usepackage{algorithmic}
\usepackage{tikz}
\usetikzlibrary{positioning,calc,shapes.geometric,arrows.meta,decorations,backgrounds,fit}
\usepackage{caption}
\usepackage{bbm}

\theoremstyle{plain}
\newtheorem{theorem}{Theorem}[section]

\newtheorem{lemma}[theorem]{Lemma}

\theoremstyle{definition}

\theoremstyle{remark}

\newcommand{\methodname}{GSQ}

\renewcommand{\paragraph}[1]{\textbf{#1}}
\title{\methodname{}: Highly-Accurate Low-Precision Scalar Quantization for LLMs via Gumbel-Softmax Sampling}

\author{%
  Alireza Dadgarnia\stepcounter{footnote}\thanks{Corresponding authors: alirezadadgarnia1378@gmail.com, dan.alistarh@ist.ac.at} \\
  ISTA \\
  \And
  Soroush Tabesh \\
  ISTA \\
  \And
  Mahdi Nikdan \\
  ISTA \\
  \And
  Michael Helcig \\
  ETH Zürich \\
  \And
  Eldar Kurtić \\
  ISTA \& Red Hat AI \\
  \And
  Maximilian Kleinegger \\
  TU Wien \& ISTA \\
  \And
  Dan Alistarh\footnotemark[2] \\
  ISTA \& Red Hat AI \\
}

\begin{document}

\vspace{-1em}
\maketitle

\vspace{-1em}
\begin{abstract}
Quantization has become a standard tool for efficient LLM deployment, especially for local inference, where models are now routinely served at $2$--$3$ bits per parameter. The state of the art is currently split into simple scalar quantization techniques, such as GPTQ or AWQ, which are widely deployed but plateau in accuracy at 3--4 bits per parameter (bpp), and ``second-generation'' vector- or trellis-quantized methods, such as QTIP, GPTVQ and AQLM, which push the accuracy frontier but are notoriously hard to implement and to scale. In this paper, we ask whether this gap is fundamental, or whether a carefully optimized \textit{scalar} quantizer can recover most of it. We answer in the affirmative, by introducing \methodname{} (Gumbel-Softmax Quantization), a post-training scalar quantization method which jointly learns the per-coordinate grid assignments and the per-group scales using a Gumbel-Softmax relaxation of the discrete grid. \methodname{} matches the cardinality of the relaxation to the small number of levels available in the target bit-width regime (e.g., 3--8 levels for ternary and 3 bpp, respectively), making optimization tractable. Practically, on the standard Llama-3.1-8B/70B-Instruct models, \methodname{} closes most of the gap between scalar quantization and the QTIP frontier at 2 and 3 bits, while using a symmetric scalar grid with group-wise quantization, and thus remains compatible with existing scalar inference kernels. We further show that the same discrete-assignment optimization can be applied to practical GGUF K-Quant checkpoints: starting from publicly released GGUF models, \methodname{} improves accuracy while projecting the result back into the same deployment format. Finally, \methodname{} scales to trillion-scale Mixture-of-Experts models such as Kimi-K2.5, where vector-quantized methods are difficult to apply. The source code is publicly available at \url{https://github.com/IST-DASLab/GSQ}.
\end{abstract}

\vspace{-1em}
\section{Introduction}
\vspace{-0.5em}

The memory and bandwidth costs of LLM inference have made quantization a standard approach for efficient deployment. Among the many quantization directions studied~\citep{gptq, awq, llmint8, spqr, owq, quarot, spinquant, flatquant, smoothquant, quip, quipsharp, qtip, aqlm, gptvq, chen2025efficientqat}, \textit{weight-only} quantization has emerged as the standard for \emph{local} deployment, where the bottleneck is memory rather than compute, and where engines such as llama.cpp~\citep{llamacpp} and Ollama~\citep{ollama} have made models broadly accessible. It is now common to have versions of large open models at around 2--3 bits per parameter, and a whole open-source ecosystem of ``quants'' has emerged around repositories such as Hugging Face~\citep{huggingface, unsloth_2025_kimi_k25}. In practice, many of these models are distributed in deployment-oriented formats such as GGUF K-Quants, which are directly consumed by local inference engines and therefore impose strong compatibility constraints on any improved quantization method.

\paragraph{Existing techniques.}
Broadly, weight quantization techniques can be viewed as two successive  ``waves''. The \emph{first wave} investigated scalar (1D) quantization methods, with llama.cpp~\citep{llamacpp}, bitsandbytes~\citep{bitsandbytes, llmint8}, GPTQ~\citep{gptq}, and AWQ~\citep{awq} being among the most popular methods. These approaches round each weight independently to a small uniform grid, are simple to implement, and benefit from highly optimized unpacking kernels; as a result, they enjoy broad adoption. Their main limitation is \textit{accuracy}: scalar quantization has hit an ``error wall'' around 3--4 bits per parameter (bpp), below which output quality degrades~\citep{gptq, chen2025efficientqat}. The \emph{second wave}, by contrast, has focused on more expressive vector-quantized or trellis representations, including AQLM~\citep{aqlm}, QuIP\#~\citep{quipsharp}, QTIP~\citep{qtip} with its implementation exllamav3~\citep{exllamav3}, and GPTVQ~\citep{gptvq}. By minimizing reconstruction MSE jointly over \textit{groups} of weights, these methods substantially reduce accuracy loss at 2--3 bits per parameter; yet, unfortunately, the resulting representations are considerably harder to implement and scale. Specifically,~\citet{qtip} observed that, although VQ and trellis methods yield major memory savings, they lead to only small decoding speedups vs BF16 due to format complexity. 

We are thus left with a clear gap between accuracy and practicality. The question we ask is:
\emph{Can scalar quantization bridge most of the accuracy gap to complex vector- or trellis-based techniques, while remaining a drop-in replacement for existing popular formats?} 

\paragraph{Our approach.}
We answer this question affirmatively, by proposing \methodname{} (Gumbel-Softmax Quantization), a post-training quantization method which closes most of the gap to second-wave techniques while preserving deployment-friendly scalar structure. In its main form, \methodname{} produces symmetric, group-wise, $b$-bit weights drawn from a small uniform grid, and is therefore directly compatible with existing scalar inference kernels. Yet, on the accuracy side, it substantially improves over previous scalar-focused methods such as GPTQ~\citep{gptq}, AWQ~\citep{awq}, QuIP~\citep{quip}, and EfficientQAT~\citep{chen2025efficientqat} at 2 and 3 bits, where it recovers the majority of the gap to the strongest existing baselines. Beyond this standard scalar setting, we also show that the same local discrete-assignment view can be used to improve existing GGUF K-Quant checkpoints: \methodname{} starts from a  GGUF model, optimizes its assignments, and projects the result back into the same K-Quant format, preserving compatibility with GGUF inference stacks. \methodname{} also scales to very large Mixture-of-Experts (MoE) models, including the trillion-parameter-scale Kimi-K2~\citep{team2025kimi}, where second-wave methods have so far not been applied.

\paragraph{Method overview.}
The key idea is that we want to reformulate layer-wise reconstruction as a \emph{differentiable} discrete-assignment problem. For each weight coordinate, we introduce a small set of trainable logits over the candidate grid points, and obtain a soft quantized weight via Gumbel-Softmax sampling~\citep{maddison2016concrete, jang2016categorical}. The resulting reconstruction loss is fully differentiable in both the per-group scales and the discrete assignments, and can be optimized jointly via gradient-based methods. As the temperature is annealed, the soft assignments collapse onto grid points, yielding a fully discrete quantized layer at the end of training. 

One key observation about grid size is that Gumbel-Softmax relaxation is a natural fit for low-bit scalar quantization. In the regimes we are interested in (e.g. ternary and 2-bit), the cardinality of the per-coordinate grid is small, so a Gumbel-Softmax distribution over the entire grid introduces only a few logits per weight and can be optimized end-to-end at LLM scale. At higher bit-widths (e.g. 3-4 bpp), where the grid grows exponentially, we replace the global relaxation with a \emph{new  local-shift formulation} in which only a small number of nearest grid points around the assignment are considered, keeping overheads small. This combination allows \methodname{} to operate uniformly across the entire low- to mid-bit range. 

\paragraph{Accuracy results.}
Empirically, \methodname{} sets a new state of the art for scalar quantization at low bit-widths. For standard benchmark experiments on Llama-3.1-8B-Instruct and Llama-3.1-70B-Instruct, at 2 bits per parameter, \methodname{} improves average zero-shot accuracy by a remarkable 4.76 and 4.14 points, respectively over the best scalar baseline (EfficientQAT), and trails QTIP, the strongest and most complex prior method, by only 1.33 and 1.68 points, respectively. At 3 bits, the picture is similar: \methodname{} matches or surpasses all scalar baselines, and is essentially on par with QTIP on the 70B model. Notably, these results are obtained with \emph{symmetric} group-wise quantization, without any zero-point parameters, in contrast to most baselines, which is direct evidence that the gains come from better optimization of the discrete assignments rather than from a more flexible quantizer. On the same models, ternary (1.58-bit) \methodname{} already exceeds all scalar baselines even if they are run at higher 2-bit precision. Furthermore, because \methodname{} produces standard scalar layers, it naturally supports non-uniform bit allocation across layers. Using RCO~\citep{helcig2026rco} to search per-layer bit assignments on Llama-3.1-8B-Instruct and Llama-3.1-70B-Instruct, mixed 2/3-bit configurations at 2.37 and 2.62 average bits per parameter retain most of the 3-bit accuracy while substantially reducing model size. We provide speedup results via Humming kernels~\citep{humming2025}.

We further evaluate \methodname{} in the popular GGUF K-Quant setting. Starting from  Unsloth GGUF checkpoints for Qwen3-8B, \methodname{} improves both Q3\_K\_M and Q2\_K models while preserving the same output format. The gains are especially large in the aggressive Q2\_K setting, where the average score across AIME25, GPQA Diamond, and MMLU-Pro improves from 50.03 to 56.28.

Since \methodname{} only requires per-coordinate discrete optimization and per-group scales, its memory footprint is close to that of a standard scalar PTQ approach. This allows us to directly apply \methodname{} to massive Mixture-of-Experts models such as Kimi-K2/2.5, where the codebook training and per-block updates become prohibitively expensive. To our knowledge, \methodname{} is the first method to obtain low-bit, close-to-lossless quantization of trillion-parameter MoE models using a fully scalar, kernel-compatible format. 

\vspace{-1em}
\section{Related work}
\vspace{-0.5em}
\label{sec:related}

Early post-training quantization (PTQ) methods such as 
LLM.int8() \citep{llmint8} showed that quantizing 99.9\% of features to INT8 while keeping the outliers in 16-bit achieves significant memory and runtime improvements. \citet{gptq} introduced GPTQ, which uses second-order information to minimize layer-wise quantization error based on the Optimal Brain Surgeon framework \citep{obs}, while AWQ \citep{awq} used activation statistics to identify and protect a small set of weights. Outlier-aware formats \citep{spqr, owq} suggest keeping a small set of outlier weights in high-precision. More recently, rotation-based methods have emerged, enabling near-lossless 4-bit quantization of weight, activations, and KV cache \citep{quarot, spinquant, flatquant}. 
As discussed, vector/codebook methods target 2-bit compression or less while retaining full-precision execution, where QuIP \citep{quip}, QuIP\# \citep{quipsharp}, QTIP \citep{qtip}, AQLM \citep{aqlm}, and PV Tuning \citep{malinovskii2024pv} are strong baselines.

Quantization-aware training (QAT) fine-tunes the model under simulated low-precision arithmetic. EfficientQAT \citep{chen2025efficientqat} makes QAT practical for scalar quantization of LLMs by combining block-wise training of model parameters with a final end-to-end optimization of quantization parameters. Early work on training binary networks \citep{binaryconnect, xnornet} established that 1-bit weights are possible for general deep neural networks. For LLMs, BitNet \citep{bitnet} and follow-up work argue that training in ternary (1.58-bit) can be competitive with full precision. TernaryLLM \cite{ternaryllm} uses trainable scale and zero-point parameters along with a specialized information-theoretic knowledge distillation objective. In the post-training settings, PT$^2$-LLM \citep{pt2llm} enables ternary quantization via iteratively alternating between refining the grid and rounding. Tequila \citep{tequila} reactivates deadzone-trapped weights by re-introducing them as dynamic bias parameters. PTQTP \citep{ptqtp} decomposes the weights into two trit-planes, achieving multiplication-free additive inference. PT-BitNet \citep{ptbitnet} first transforms the weights to make them quantizaition-friendly, then quantizes each weight block separately. For binary PTQ, BiLLM \citep{billm} compresses outlier weights by a binary residual approximation approach, while simply binarizing the remaining weights. DB-LLM \citep{dbllm} decomposes its 2-bit budget into two independent binaries. ARB-LLM \citep{arbllm} present alternating refined binarization to progressively update the binary parameters. PB-LLM \citep{pbllm} simply keeps the salient weights in high-precision, while binarizing the rest. PTQ1.61 \citep{ptq161} takes a similar approach, except that the salient weights are structured and are quantized to 4-bits. STBLLM \citep{stbllm} combines N:M sparsity with binarization of non-pruned weights and provides system support for this format.

MoE quantization must preserve both expert and router quality; QMoE \citep{qmoe} was the first to enable sub-1-bit compression at trillion-parameter scale with custom on-the-fly decoding kernels. MoQa \citep{moqa} assigns different bit-width to each expert based on their sensitivity and distribution of tokens. MoPEQ \citep{mopeq} replaces the criteria with the more rigorous Hessian trace approximation. EAQuant \citep{eaquant} introduces expert smoothing to suppress activation outliers, aligns the logit distribution of the router to preserve expert selection, and balances calibration data across experts. Similarly, ExpertQuant \citep{expertquant} uses a Jaccard loss to ensure the top-k selected experts remain unchanged, while MoEQuant \citep{moequant} specifically addresses the data imbalance problem. Additionally, EAC-MoE \citep{eacmoe} not only calibrates the routers to preserve expert selection, but also suggests pruning less frequently used experts.

Because quantization and sparsity introduce discrete choices (rounding, masks), many approaches use differentiable proxies. LSQ \citep{lsq} learns quantizer step sizes via backpropagation, and DiffQ \citep{diffq} uses pseudo quantization noise to optimize bit allocation in a differentiable way. The Gumbel-Softmax relaxation allows gradient-based learning of discrete decisions and has been applied to neural architecture search \citep{herrmann2020channel}. MaskLLM \citep{maskllm} uses the same idea to learn semi-structured N:M masks end-to-end using Gumbel-Softmax sampling.

\section{The Gumbel-Softmax quantization (\methodname{}) method}
\label{sec:method}
\paragraph{Notation and Gumbel-Softmax Sampling.}
Let $f(\cdot; \vw)$ denote a function parameterized by weights $\vw \in \R^{d}$ that we aim to compress, e.g. a single linear layer, a sub-module (such as a Transformer block), or an entire  network. Given calibration input data $\vx$, the objective is to find a parameterization $\vhw$ that satisfies a constraint set $\mathcal{C}$ while minimizing the output reconstruction error, measured e.g. in the Frobenius norm $\norm{\cdot}_F$: 
\begin{equation}
\vhw = \argmin_{\vbw}~ \norm{f(\vx;\vbw) - f(\vx;\vw)}_F^2 \quad \text{s.t.} \quad \vbw \in \mathcal{C}.
\label{eq:objective}
\end{equation}

Optimizing the objective in Equation \ref{eq:objective} becomes challenging when the constraint set $\mathcal{C}$ includes discrete components, since standard gradient-based methods are not directly applicable. This difficulty arises naturally in model compression: quantization requires mapping weights to a discrete grid, while sparsity needs setting a specific subset of weights to zero. To address this challenge, our method leverages Gumbel-Softmax sampling \citep{maddison2016concrete, jang2016categorical} to make the discrete selection process differentiable.

Gumbel-Softmax sampling, summarized in Algorithm \ref{alg:gumbel-softmax}, avoids strictly selecting a single value from the discrete set $\mathcal{D}$ by computing a ``soft'' sample as a weighted sum over all candidate values. Specifically, each member of $\mathcal{D}$ is assigned a learnable logit $\ell$, to which random noise $g_\ell$ is added to simulate sampling; these perturbed logits are then normalized into probabilities $p_\ell$. The temperature parameter $\tau$ controls the sharpness of this distribution: training begins with a higher $\tau$ to allow gradients to flow through multiple candidates, and as $\tau$ is annealed toward zero, the weighted sum effectively converges to a single discrete element.
This optimization introduces $|\mathcal{D}|$ learnable logits. In the special case where $\mathcal{D}$ contains only two elements, instead of introducing two separate logits, we use a single logit $\ell$ and assign $-\ell$ as the logit for the other element. The resulting softmax is equivalent to a sigmoid function with (noisy) logit $2 \ell$. This halves the number of trainable parameters in the binary case and substantially lowers the memory overhead.

\subsection{The ternary quantization case}
We begin by describing how Gumbel-Softmax sampling is used to compress the model parameters into a ternary quantization format. Specifically, we impose the following constraint in Objective~\ref{eq:objective}:
\begin{align}
\mathcal{C}_\text{ternary} &= \big\{ \vbw \mid \vbw = s \cdot \vm \odot \vb; s \in \R, \vm \in \{0,1\}^d, \vb \in \{-1,1\}^d \big\}.
\end{align}
Under this formulation, a ternary-quantized vector is parameterized by three components: a binary mask $\vm$ indicating which entries are zero, a binary sign vector $\vb$ specifying whether each nonzero entry is $-1$ or $+1$, and a scaling factor $s$. This parameterization introduces $2d$ binary decisions, which we relax using $2d$ instances of binary Gumbel-Softmax sampling. Concretely, we jointly optimize the scale $s$, the mask logits $\vl^{(m)} \in \R^d$, and the sign logits $\vl^{(b)} \in \R^d$. At each training step, $\vm$ and $\vb$ are obtained by applying Gumbel-Softmax sampling to $\vl^{(m)}$ and $\vl^{(b)}$, respectively. The full procedure is provided in Algorithm \ref{alg:ternary}. Although the formulation above assumes a single shared scale value (i.e., symmetric global quantization), the same framework extends naturally to asymmetric and/or group-wise quantization with minor modifications.

\begin{figure}[t]
\centering
\captionsetup{type=algorithm}

\begin{minipage}[t]{0.47\linewidth}
\centering
\captionof{algorithm}{Gumbel-Softmax (GS) Sampling}
\label{alg:gumbel-softmax}
\vspace{0.25em}
\footnotesize
\begin{algorithmic}[1]
  \REQUIRE Finite set $\mathcal{D}=\{d_1,\ldots,d_n\}$
  \REQUIRE Logits $\ell_1,\ldots,\ell_n$
  \REQUIRE Temperature $\tau>0$, noise scale $\kappa>0$
  \ENSURE Soft sample $\tilde d$

  \FOR{$i = 1$ \textbf{to} $n$}
    \STATE Draw $g_i \sim \mathrm{Gumbel}(0,1)$
  \ENDFOR

  \FOR{$i = 1$ \textbf{to} $n$}
    \STATE
    $\displaystyle
    p_i \leftarrow
    \frac{
      \exp\!\bigl((\kappa \ell_i + g_i)/\tau\bigr)
    }{
      \sum_{j=1}^{n}
      \exp\!\bigl((\kappa \ell_j + g_j)/\tau\bigr)
    }$
  \ENDFOR

  \STATE \textbf{return}
  $\displaystyle \tilde d \leftarrow \sum_{i=1}^{n} p_i d_i$
\end{algorithmic}
\end{minipage}
\hfill
\begin{minipage}[t]{0.50\linewidth}
\centering
\captionof{algorithm}{Ternary \methodname{}}
\label{alg:ternary}
\vspace{0.25em}
\footnotesize
\begin{algorithmic}[1]
  \REQUIRE Weights $\vw \in \mathbb{R}^d$, calibration data $\vx$
  \REQUIRE Schedules $\{\tau_t\}_{t=1}^T$, $\{\kappa_t\}_{t=1}^T$
  \ENSURE Ternary weights $\vhw \in \{-s,0,s\}^d$

  \STATE Initialize scale $s \in \mathbb{R}$, logits $\vl^{(m)},\vl^{(b)} \in \mathbb{R}^{d}$

  \FOR{$t = 1$ \textbf{to} $T$}
    \FOR{$i = 1$ \textbf{to} $d$}
      \STATE
      $\displaystyle
      \tilde{\vm}_i \leftarrow
      \mathrm{GS}
      \bigl(
        \{0,1\},
        \{-\vl^{(m)}_i,\vl^{(m)}_i\},
        \tau_t,\kappa_t
      \bigr)$
      \STATE
      $\displaystyle
      \tilde{\vb}_i \leftarrow
      \mathrm{GS}
      \bigl(
        \{-1,1\},
        \{-\vl^{(b)}_i,\vl^{(b)}_i\},
        \tau_t,\kappa_t
      \bigr)$
    \ENDFOR

    \STATE $\vbw \leftarrow s \cdot \tilde{\vm} \odot \tilde{\vb}$
    \STATE $\mathcal{L} \leftarrow
      \norm{f(\vx;\vbw)-f(\vx;\vw)}_F^2$
    \STATE Update $s,\vl^{(m)},\vl^{(b)}$ using $\nabla \mathcal{L}$
  \ENDFOR

  \STATE $\hat{\vm}_i \leftarrow \mathbbm{1}\{\vl^{(m)}_i \ge 0\}$,  $\quad \hat{\vb}_i \leftarrow 2\mathbbm{1}\{\vl^{(b)}_i \ge 0\}-1$
  \STATE \textbf{return}
  $\vhw \leftarrow s \cdot \hat{\vm} \odot \hat{\vb}$
\end{algorithmic}
\end{minipage}

\vspace{0.25em}
\caption*{Algorithms 1 and 2: The Gumbel-Softmax relaxation, denoted by $\mathrm{GS}$ in Algorithm~\ref{alg:ternary}, and its use inside ternary \methodname{} to learn differentiable mask and sign variables before hard thresholding.}
\label{fig:gumbel-ternary-algorithms}
\end{figure}

\paragraph{Initialization.}
Instead of initializing the logits randomly, we warm-start from the GPTQ ternary solution $\vq_\text{GPTQ} \in \{-1,0,1\}^d$ \citep{gptq}. Recall that the mask logit $\vl^{(m)}_i$ controls whether weight $i$ is nonzero: a positive logit favors $\vm_i=1$ (active), while a negative logit favors $\vm_i=0$ (pruned).
Similarly, the sign logit $\vl^{(b)}_i$ controls the sign of the nonzero weight: a positive logit favors $\vb_i=+1$ and a negative logit favors $\vb_i=-1$. We therefore initialize each logit to reflect the corresponding GPTQ decision:
\begin{equation}
(\vl^{(m)}_\text{GPTQ})_i =
\begin{cases}
+1.0, & \text{if } (\vq_\text{GPTQ})_i \neq 0,\\
-1.0, & \text{if } (\vq_\text{GPTQ})_i = 0,
\end{cases}
\qquad
(\vl^{(b)}_\text{GPTQ})_i =
\begin{cases}
+1.0, & \text{if } (\vq_\text{GPTQ})_i = +1,\\
-1.0, & \text{if } (\vq_\text{GPTQ})_i = -1,\\
\phantom{+}0.0, & \text{if } (\vq_\text{GPTQ})_i = 0.
\end{cases}
\label{eq:ternary_init}
\end{equation}
When $(\vq_\text{GPTQ})_i = 0$, the sign logit is initialized to $0$, since the sign is irrelevant at initialization but may become active during subsequent optimization if the mask flips to nonzero.
To prevent getting stuck, we inject isotropic Gaussian noise into the logits before training. We initialize the mask and sign logits as
\begin{equation}
\vl = \sigma_{\mathrm{init}}\Big(\boldsymbol{\epsilon} +
\alpha\,\vl_\text{GPTQ}\Big),
\qquad
\boldsymbol{\epsilon}\sim\mathcal{N}(\mathbf{0},\mathbf{I}),
\label{eq:logit_init}
\end{equation}
where $\alpha \in \R$ controls the strength of the GPTQ warm-start relative to the injected noise, and $\sigma_{\mathrm{init}} \in \R$ sets the overall scale of the logits. We also initialize the quantization scale value $s$ to the scale computed by GPTQ.

\subsection{General scalar quantization}
We now describe how \methodname{} extends to general scalar quantization. Suppose the goal is to quantize the model parameters to $b$ bits using symmetric quantization with a single shared scale factor. In this setting, the constraint set $\mathcal{C}$ in Objective~\ref{eq:objective} can be written as
\begin{equation}
\mathcal{C}_\text{$b$-bit} = \left\{ \vbw \middle| \vbw = s \cdot \vq;\ s \in \mathbb{R},\ \vq \in \mathcal{G}_b^d \right\},
\end{equation}
where $\mathcal{G}_b$ denotes the ordered quantization grid, with cardinality $\lvert \mathcal{G}_b \rvert = 2^b$, specifying the set of values that each quantized parameter may take. This formulation imposes no structural restrictions on the grid and therefore accommodates both uniform and non-uniform quantization schemes.

To enable gradient-based optimization, we apply Gumbel--Softmax sampling independently to each of the $d$ coordinates. Each such instance introduces $2^b$ trainable logits, i.e., the logits can be concatenated into $\vl \in \R^{d \times 2^b}$. With this relaxation, we jointly optimize the logits $\vl$ and the scale parameter $s$.

\paragraph{The 2-bit case.} As a direct application, consider 2-bit uniform quantization with $\mathcal{G}_2 = \{-2, -1, 0, 1\}$. In this case, each coordinate is associated with 4 trainable logits for Gumbel--Softmax sampling. Together with the shared scale parameter $s$, this yields a total of $4d + 1$ trainable parameters. 
Although the grid itself is skewed toward negative values, we allow the scale $s$ to take negative values as well, thereby removing any inherent bias toward either side.

\paragraph{Higher bit-widths.} As the bit-width $b$ increases, the number of trainable logits, and consequently the required memory, grows exponentially in bpp. To address this, for $b > 2$, we use a local shift-based formulation, explained below and summarized in Figure~\ref{fig:local-shift}.
The key observation is that, each coordinate typically remains close to its initialized quantized value, and large jumps across the quantization grid rarely happen (see Appendix~\ref{app:ablation_local_shift}). Instead of assigning a logit to every value in $\mathcal{G}_b$, we only learn a small relative shifts.
Specifically, suppose we are given an initialized quantized vector $\vq^0 \in \mathcal{G}_b^d$. For each coordinate $i \in \{1,\dots,d\}$, let $j_i^0 \in \{1,\dots,2^b\}$ denote the index of the initialized grid point, i.e.,
$q_i^0 = (\mathcal{G}_b)_{j_i^0}.$

Instead of introducing $2^b$ logits for coordinate $i$, we introduce only 5 logits corresponding to a discrete shift $\delta_i \in \{-2,-1,0,1,2\}$. Let $\vl_\delta \in \mathbb{R}^{d \times 5}$ denote the corresponding trainable logits. At each training step, $\delta_i$ is obtained by applying Gumbel-Softmax sampling to the $i$-th row of $\vl_\delta$. The resulting grid index is then 
$j_i = \mathrm{clip}(j_i^0 + \delta_i, 1, 2^b),$ 
where $\mathrm{clip}(x,a,b) = \min\{\max\{x,a\}, b\}$ clips the value into the valid range. The final quantized value is
$q_i = (\mathcal{G}_b)_{j_i}.$

Equivalently, the constraint set becomes
\begin{equation}
\mathcal{C}^{\mathrm{shift}}_\text{$b$-bit}
=
\left\{
\vbw \middle|
\vbw = s \cdot \vq,\;
s \in \mathbb{R},\;
q_i = (\mathcal{G}_b)_{\mathrm{clip}(j_i^0 + \delta_i,\,1,\,2^b)},\;
\delta_i \in \{-2,-1,0,1,2\}
\right\}.
\end{equation}
Under this parameterization, each coordinate requires only 5 trainable logits rather than $2^b$. Therefore, the total number of trainable parameters is reduced from $d \times 2^b + 1$ to $5d + 1$, making higher-bit optimization practical while still allowing each coordinate to move to nearby grid values.

\paragraph{Initialization.}
As in the ternary case, we use GPTQ~\citep{gptq} for initialization, in a way that the induced distribution for each coordinate $i$ follows a Gaussian-like prior around the GPTQ solution $\vq_\text{GPTQ} \in \mathcal{G}_b^d$. The details are presented in the Appendix \ref{app:initialization}.

\subsection{Implementation details}

\paragraph{Objective.}
Unlike most PTQ methods, \methodname{} is not tied to a layerwise quadratic objective. In principle, it can optimize richer objectives directly over the quantized parameters, such as block-level reconstruction losses or model-level task-aware losses. This flexibility, however, comes at the cost of additional memory, since \methodname{} introduces auxiliary trainable logits whose footprint is typically 2-5$\times$ that of the weights being quantized. As a result, jointly optimizing the entire model is prohibitively expensive for large Transformers and MoEs.

In practice, we adopt a combination of objectives during optimization. For example, in the ternary quantization setting, where the GPTQ initialization is particularly weak, we first warm up the logits (initialized from GPTQ) using a cheaper layerwise quadratic objective applied independently to each linear layer. This is then followed by a blockwise or expertwise optimization stage. The exact procedure depends on the model and setting, and is described in Section~\ref{sec:experiments}.

We note that this differs from approaches that first quantize each layer independently, and afterwards perform a limited block-level tuning over a small subset of continuous parameters such as the quantization scales \citep{quipsharp, aqlm}. In \methodname{}, the discrete assignments and their associated continuous parameters are optimized jointly.

\paragraph{Optimizer.} The Gumbel-Softmax relaxation can enter a saturated regime in which the relaxed categorical distribution becomes nearly one-hot, for example due to temperature annealing or growing logit gaps; see Appendix~\ref{app:vanishing_logit_grads}. ~\citep{maskllm} mitigate this through problem-specific regularization and by increasing the $\epsilon$ hyperparameter in AdamW (e.g., to $10^{-5}$). When gradients vanish, AdamW effectively stalls. We address this differently, by using Lion~\citep{lion}, which is much less sensitive to vanishing gradients.

\paragraph{Gradient accumulation.}
 In our setting, gradient accumulation plays an additional role, beyond memory savings: because we \emph{resample} the Gumbel noise independently for each forward pass, averaging gradients across micro-batches reduces sampling variance. This leads to noticeably more stable optimization.

\vspace{-1.0em}
\section{Experiments}
\vspace{-0.5em}
\label{sec:experiments}

\subsection{Experimental setup}
\label{sec:exp_setup}

\paragraph{Models.}
We evaluate on two standard dense models, Llama-3.1-8B-Instruct and Llama-3.1-70B-Instruct~\citep{grattafiori2024llama}, as well as the MoE model Kimi-K2.5~\citep{team2026kimi}. For the Llama models, we quantize all non-embedding and non-head linear layers, with one exception: in the 8B model, we find the \texttt{down\_proj} of the second block to be unstable under high compression and leave it in full precision. For Kimi-K2.5, we quantize only the non-shared expert weights while leaving the shared experts untouched; we also skip the vision-related components and only evaluate the language model. Finally, to evaluate compatibility with existing local-deployment formats, we also apply \methodname{} to publicly available GGUF K-Quant checkpoints of Qwen3-8B, using the released quantized model as initialization and preserving the same GGUF format after optimization.

\paragraph{Quantization configuration.}
For scalar experiments, we focus on 2-bit and 3-bit weight-only quantization with a group size of 128. \methodname{} uses a symmetric scalar quantizer, where each group shares a single scale value. Groups are formed row-wise over consecutive entries, following the standard packing layout used in prior work. We also include brief ternary quantization experiments (i.e., 1.58-bit). In addition, we evaluate \emph{non-uniform} bit allocation, in which different layers are assigned different bit-widths  so as to achieve a fractional average rate such as 2.37 or 2.62 bits per parameter.

\paragraph{Training details.}
We perform block-wise optimization with a Gumbel-Softmax relaxation over the discrete assignments, followed by scale-only fine-tuning only for 2-bit Llama experiments. 
Please see Appendix~\ref{app:train_hparams} for details.

\paragraph{Within-block staging.}
For dense models, we found that optimizing all quantized layers in a Transformer block jointly under a block reconstruction loss was suboptimal, despite being the natural formulation used in prior work~\citep{aqlm,chen2025efficientqat}. We instead use a staged schedule: query and key projections are first optimized independently with linear reconstruction losses; value and output projections are then optimized jointly under the self-attention output reconstruction loss; and the MLP projections are optimized last under the full block reconstruction loss. After a block is quantized, it is frozen, and subsequent blocks are optimized using inputs from the already-quantized prefix, making the procedure aware of accumulated quantization error. We provide further motivation and details in Appendix~\ref{app:within_block_staging}.

\paragraph{Calibration data and training budget.}
For calibration data, we use FineWeb-Edu~\citep{lozhkov2024fineweb-edu} in Llama experiments, and OpenThoughts \citep{guha2025openthoughtsdatarecipesreasoning} for Kimi K2.5 experiments. Unless otherwise stated, we use 4096 sequences of length 4096. For block-wise training, we run 20 epochs for the Llama models and 10 epochs for Kimi-K2.5. For end-to-end scale-only fine-tuning on the Llama models, we run a single epoch over the same 4096 sequences.

\paragraph{Scalar quantization baselines.}
We include GPTQ~\citep{gptq}, QuIP~\citep{quip}, and EfficientQAT~\citep{chen2025efficientqat} as baselines; these methods are allowed to use asymmetric quantization with per-group zero-points, giving them strictly more representational freedom than \methodname{}. Whenever a released quantized checkpoint is available, we re-evaluate it directly under our evaluation pipeline; otherwise, we run the official codebase with the hyperparameters recommended by the original authors. For GPTQ and QuIP, we use 512 calibration samples as is standard, and for EfficientQAT we follow the authors' suggested setup.

\paragraph{Vector quantization baselines.}
We compare against QTIP~\citep{qtip} and PV-Tuning~\citep{malinovskii2024pv}, which optimizes over an AQLM vector quantized representation, as these are two state-of-the-art methods in the low-bit regime. Since VQ methods are not restricted to a small scalar grid, they are generally more expressive than scalar quantizers at the same bit-width; we therefore view the comparison to VQ as a particularly hard test for \methodname{}.

\subsection{Llama-3.1 results}
\label{sec:exp_llama}

\begin{table*}[t]
\centering
%\small
\setlength{\tabcolsep}{3.5pt}
\caption{Zero-shot accuracy on Llama-3.1-8B-Instruct and Llama-3.1-70B-Instruct under ternary, 2-bit, 3-bit, and non-uniform quantization. Bit/param excludes non-quantized tensors.}
\resizebox{\textwidth}{!}{%
\begin{tabular}{l|ccccccc|ccccccc}
\toprule
& \multicolumn{7}{c|}{\textbf{Llama-3.1-8B-Instruct}} & \multicolumn{7}{c}{\textbf{Llama-3.1-70B-Instruct}} \\
\textbf{Method}
& bit/param & ARC-C & ARC-E & Hella. & PIQA & Wino. & Avg.
& bit/param & ARC-C & ARC-E & Hella. & PIQA & Wino. & Avg. \\
\midrule
FP16/BF16               & 16 & 55.12 & 79.63 & 79.16 & 80.85 & 73.80 & 73.71
                         & 16 & 63.48 & 83.92 & 84.58 & 83.95 & 79.01 & 78.99 \\
\midrule \midrule
QTIP                         & 3.00 & 53.92 & 79.42 & 78.30 & 80.25 & 72.14 & 72.81
                         & 3.00& 61.77 & 82.79 & 84.21 & 83.79 & 78.30 & 78.17 \\
\midrule
GPTQ                       & 3.25 & 39.68 & 58.67 & 64.24 & 67.03 & 66.30 & 59.18
                         & 3.25 & 60.75 & 80.64 & 82.37 & \textbf{83.30} & 77.98 & 77.01 \\
QuIP                         & 3.25 & 52.30 & 76.68 & 75.37 & 78.84 & 71.90 & 71.02
                         & 3.25 & \textbf{62.12} & 82.45 & 82.83 & 82.10 & 78.06 & 77.51 \\
EfficientQAT             & 3.25 & \textbf{52.99} & \textbf{78.91} & \textbf{76.85} & 79.76 & 71.59 & 72.02
                         & 3.25 & 61.86 & \textbf{83.88} & 82.79 & 82.48 & 76.01 & 77.40 \\
\textbf{GSQ (ours)}   & 3.13 & \textbf{52.99} & 78.37 & 76.66 & \textbf{80.03} & \textbf{73.56} & \textbf{72.32}
                         & 3.13 & \textbf{62.12} & 82.83 & \textbf{83.30} & 82.75 & \textbf{78.93} & \textbf{77.99} \\
\midrule \midrule
\textbf{GSQ (ours)}   & 2.62\textsuperscript{NU} & 49.23 & 74.79 & 74.83 & 79.00 & 70.24 & 69.62
                         & 2.62\textsuperscript{NU} & 59.60 & 81.40 & 82.90 & 82.90 & 80.40 & 77.50 \\
\textbf{GSQ (ours)}   & 2.37\textsuperscript{NU} & 50.26 & 74.41 & 74.81 & 78.13 & 69.61 & 69.44
                         & 2.37\textsuperscript{NU} & 60.20 & 81.40 & 82.60 & 82.40 & 80.60 & 77.40 \\
\midrule \midrule
QTIP                        & 2.00 & 50.68 & 75.42 & 75.02 & 78.18 & 70.09 & 69.88
                         & 2.00 & 61.69 & 81.69 & 82.95 & 82.43 & 77.51 & 77.25 \\
PV-Tuning                & 2.27 & 50.26 & 73.91 & 75.28 & 79.16 & 70.56 & 69.83
                         & 2.07 & 58.62 & 80.72 & 82.72 & 81.56 & 77.74 & 76.27 \\
\midrule
GPTQ                       & 2.25 & 24.91 & 27.61 & 30.50 & 52.83 & 51.78 & 37.53
                         & 2.25 & 38.23 & 58.63 & 60.11 & 72.80 & 57.14 & 57.38 \\
QuIP                         & 2.25 & 23.72 & 28.96 & 38.65 & 53.86 & 50.83 & 39.20
                         & 2.25 & 39.51 & 60.44 & 68.95 & 73.61 & 65.35 & 61.57 \\
EfficientQAT             & 2.25 & 43.77 & 67.55 & 68.65 & 74.65 & 64.33 & 63.79
                         & 2.25 & 54.86 & 77.27 & 79.01 & 80.36 & 65.67 & 71.43 \\
\textbf{GSQ (ours)}  & 2.13 & \textbf{48.12} & \textbf{72.35} & \textbf{73.42} & \textbf{78.07} & \textbf{70.80} & \textbf{68.55}
                         & 2.13 & \textbf{58.87} & \textbf{79.55} & \textbf{82.11} & \textbf{81.07} & \textbf{76.24} & \textbf{75.57} \\ 
\midrule \midrule
\textbf{GSQ (ours)}  & 1.71 & \textbf{42.83} & \textbf{67.13} & \textbf{67.91} & \textbf{73.50} & \textbf{65.82} & \textbf{63.44}
                         & -- & -- & -- & -- & -- & -- & -- \\ 
\bottomrule
\end{tabular}%
}
\label{tab:llama_all}
\end{table*}

Table~\ref{tab:llama_all} reports zero-shot accuracy on Llama-3.1-8B-Instruct and Llama-3.1-70B-Instruct. Across both model sizes, \methodname{} is the strongest scalar quantization method in our comparison and remains competitive with recent VQ methods.

At 2 bits, \methodname{} substantially outperforms GPTQ, QuIP, and EfficientQAT despite using symmetric quantization without zero-points. On Llama-3.1-70B, it improves over the best scalar baseline by 4.14 average points and narrows the gap to VQ methods to 1.68 points vs. QTIP and 0.70 points vs. PV-Tuning. At 3 bits, \methodname{} again gives the best scalar results and approaches the VQ frontier, showing that the benefit is not limited to the most extreme 2-bit setting. Finally, ternary \methodname{} on Llama-3.1-8B is competitive with, and often exceeds, scalar baselines evaluated at higher 2-bit precision.

\paragraph{Non-uniform results.}
Because \methodname{} produces standard scalar quantized layers, it naturally supports \emph{non-uniform} bit allocation, in which different layers are assigned different bit-widths (e.g., some layers at 3 bits and others at 2 bits) to achieve a target average rate. This is motivated by the observation that not all layers are equally sensitive to quantization: allocating more bits to sensitive layers and fewer to robust ones can yield a better accuracy-compression trade-off than using a single uniform bit-width everywhere. Per-layer bit assignments are obtained via non-uniform search~\citep{helcig2026rco}. Table~\ref{tab:llama_all} includes non-uniform \methodname{} results on Llama-3.1-70B-Instruct at 2.62 and 2.37 average bits per parameter. These configurations interpolate between the uniform 3-bit and 2-bit operating points. Llama-3.1-8B-Instruct is also evaluated at 2.37 average bits per parameter.

% \begin{table}[t]
\begin{wraptable}{r}{0.5\textwidth}
\vspace{-1.0em}
\centering
\small
\setlength{\tabcolsep}{4pt}
\caption{End-to-end inference speedup for \methodname{}-quantized Llama-3.1-70B-Instruct on NVIDIA L40s GPUs (vLLM + Humming kernels). TPS/GPU = output tokens per second per GPU.}
\scalebox{0.8}{
\begin{tabular}{l|c|cc}
\toprule
\multirow{2}{*}{\textbf{Method}} & \multirow{2}{*}{\textbf{Avg bit/param}} & \textbf{ShareGPT} & \multirow{2}{*}{\textbf{Speedup}} \\
 & & \textbf{TPS/GPU} & \\
\midrule
BF16 (4 GPU)      & 16.00 &  60.3 & 1.00$\times$ \\
\midrule
Uniform 3-bit     &  3.13 & 289.6 & 4.80$\times$ \\
Non-uniform 2.62  &  2.62 & 301.1 & 4.99$\times$ \\
Non-uniform 2.37  &  2.37 & 329.2 & 5.46$\times$ \\
Uniform 2-bit     &  2.13 & 374.0 & 6.20$\times$ \\
\bottomrule
\end{tabular}
}
\label{tab:speedup_70b}
\vspace{-1.0em}
\end{wraptable}
% \end{table}
\paragraph{Speedup.}
A key advantage of scalar quantization over vector- or trellis-based methods is that it can directly leverage highly optimized low-precision GEMM kernels, translating memory savings into proportional throughput gains. Table~\ref{tab:speedup_70b} reports end-to-end throughput for \methodname{}-quantized Llama-3.1-70B-Instruct using vLLM~\citep{kwon2023efficient} and Humming kernels~\citep{humming2025} on NVIDIA L40s GPUs. We report output tokens per second per GPU (TPS/GPU) to normalize across tensor-parallelism configurations, using a representative ShareGPT (short, conversational) workload.\footnote{\url{https://huggingface.co/datasets/anon8231489123/ShareGPT_Vicuna_unfiltered}} Uniform 2-bit quantization achieves up to 6.2$\times$ speedup over BF16, while the non-uniform configurations at 2.37 and 2.62 average bits provide 4.99--5.46$\times$ speedup, demonstrating that non-uniform bit allocation offers a practical accuracy--throughput trade-off within a single, kernel-compatible scalar format.

\subsection{Kimi-K2.5 results}
\label{sec:exp_kimi}

We further evaluate \methodname{} on Kimi-K2.5, a 1T-parameter MoE model natively stored in 4-bit format. We quantize only the non-shared expert weights to 2 bits, leaving shared experts and vision components unchanged, and compare against the original model under the same evaluation pipeline.

Table~\ref{tab:kimi_combined}
shows that 2-bit \methodname{} largely preserves Kimi-K2.5's reasoning and coding performance. It remains close on AIME25, improves on LiveCodeBench-v6 and MATH500, but drops on GPQA Diamond. We attribute this pattern partly to calibration data: OpenThoughts is skewed toward mathematics and code, whereas GPQA Diamond emphasizes science-heavy question answering. To further examine this hypothesis, Appendix~\ref{sec:exp_kimi26} reports an additional Kimi-K2.6 experiment using a more diverse calibration mixture. Although this is not a controlled ablation since the base model also changes, the GPQA gap is substantially smaller, indicating the importance of calibration diversity.

Table~\ref{tab:kimi_combined} additionally
reports OpenAI-MRCR results across context lengths up to 256k. \methodname{} improves over the base model up to 32k tokens and remains close at longer ranges, with modest degradation beyond 32k. This suggests that aggressive 2-bit quantization preserves much of the model's long-context capability, although the longest contexts are more sensitive.

Our LiveCodeBench score for the original Kimi-K2.5 model is lower than the model-card number, likely due to differences in evaluation protocol. Since the exact protocol is not publicly specified, we report both base and quantized results under our common pipeline.

Lastly, we evaluate Kimi-K2.5 on agentic benchmarks. Results can be found in Appendix \ref{sec:exp_agentic}.

\subsection{Extension to GGUF K-Quant formats}
\label{sec:exp_gguf}

We finally evaluate whether \methodname{} can improve already-available GGUF K-Quant checkpoints. This setting is practically important because GGUF quantization formats are widely used for local and edge deployment, and many high-quality quantized checkpoints are released publicly. In this setting, \methodname{} treats the GGUF checkpoint as the initialization point, improves its discrete assignments, and projects the result back into the same format. Thus, the optimized model remains compatible with standard GGUF inference stacks. Additional details on the K-Quant parameterization and optimization procedure are provided in Appendix \ref{app:gguf}.

\begin{table*}[t]
\centering
\small
\setlength{\tabcolsep}{4pt}
\caption{Results on Kimi-K2.5. We compare the original model and its 2-bit \methodname{} version on mathematical reasoning, scientific QA, coding benchmarks, and long-context OpenAI-MRCR across different context lengths. OpenAI-MRCR uses 3 repeats and 4 needles.}
\small
\setlength{\tabcolsep}{2pt}
\resizebox{\textwidth}{!}{
\begin{tabular}{lccccccccccc}
\toprule
\multirow{4}{*}{\textbf{Method}} & \multirow{4}{*}{bpp}
& \multicolumn{4}{c}{\textbf{Reasoning and Coding}}
& \multicolumn{6}{c}{\textbf{OpenAI-MRCR}} \\
\cmidrule(lr){3-6} \cmidrule(lr){7-12}
& & AIME 25 & GPQA Diamond & LiveCodeBench & MATH 500
& \multirow{2}{*}{0--8k}
& \multirow{2}{*}{8k--16k}
& \multirow{2}{*}{16k--32k}
& \multirow{2}{*}{32k--64k}
& \multirow{2}{*}{64k--128k}
& \multirow{2}{*}{128k--256k} \\
& & ($n=10$) & ($n=5$) & ($n=10$) & ($n=5$)
& & & & & & \\
\midrule
Base & 4.5
& \textbf{95.33} & \textbf{89.29} & 61.37 & 96.68
& 95.37 & 77.81 & 69.19 & \textbf{57.10} & \textbf{59.50} & \textbf{46.04} \\
\methodname{} (Ours) & 2.13
& 93.00 & 76.57 & \textbf{69.37} & \textbf{97.32}
& \textbf{97.81} & \textbf{85.75} & \textbf{74.41} & 53.73 & 59.03 & 44.91 \\
\bottomrule
\end{tabular}
}
\label{tab:kimi_combined}
\vspace{-1.0em}
\end{table*}

% \begin{table*}[t]
\begin{wraptable}{r}{0.59\textwidth}
\vspace{-1.0em}
\centering
\setlength{\tabcolsep}{5pt}
\caption{GGUF K-Quant results on Qwen3-8B.}
%\resizebox{\textwidth}{!}{%
\small
\scalebox{0.73}{
\begin{tabular}{l|cc|cc|cc|c}
\toprule
\multirow{2}{*}{Method} 
& \multicolumn{2}{c|}{AIME25} 
& \multicolumn{2}{c|}{GPQA Diamond}
& \multicolumn{2}{c|}{MMLU-Pro}
& \multirow{2}{*}{Avg.} \\
& Acc. & Tokens 
& Acc. & Tokens 
& Acc. & Tokens 
& \\
\midrule
BF16
& 64.67 & 0.52M
& 58.38 & 1.48M
& 72.42 & 32.07M
& 65.16 \\
\midrule
Unsloth-Q3\_K\_M
& 56.67 & 0.58M
& 54.65 & 1.67M
& 70.23 & 35.11M
& 60.52 \\
\methodname{}-Q3\_K\_M
& \textbf{59.00} & \textbf{0.55M}
& \textbf{55.15} & \textbf{1.53M}
& \textbf{70.67} & \textbf{31.57M}
& \textbf{61.61} \\
\midrule
Unsloth-Q2\_K
& 38.67 & 0.64M
& 46.46 & 2.46M
& 64.97 & 48.78M
& 50.03 \\
\methodname{}-Q2\_K
& \textbf{52.00} & \textbf{0.58M}
& \textbf{50.91} & \textbf{1.80M}
& \textbf{65.94} & \textbf{39.69M}
& \textbf{56.28} \\
\bottomrule
\end{tabular}%
%}
}
\label{tab:qwen3_8b_gguf}
\vspace{-1.0em}
\end{wraptable}
% \end{table*}

\paragraph{Setup.}
We use publicly available Unsloth GGUF checkpoints as initialization and compare them against the corresponding \methodname{}-optimized checkpoints. We evaluate a reasoning-oriented Qwen3-8B model under two common K-Quant formats: Q3\_K\_M and Q2\_K. The evaluation includes AIME25, GPQA Diamond, and MMLU-Pro. For AIME25 and GPQA Diamond, we report the average \texttt{pass@1} score over 10 and 5 repeated evaluation runs, respectively. In addition to task accuracy, we report the average number of generated tokens, including both visible text tokens and reasoning tokens, since low-bit quantization is known affect not only correctness but also the efficiency of the model's reasoning process; specifically, quantized models tend to produce more tokens for the final answer.

Table~\ref{tab:qwen3_8b_gguf} shows that \methodname{} consistently improves over the Unsloth \citep{unsloth_2025_qwen3_8b} initialization. Gains are largest in the more aggressive Q2\_K format: the average score improves from 50.03 to 56.28 on Qwen3-8B. The Q3\_K\_M setting shows smaller but consistent gains. \methodname{} also often reduces total generated tokens, suggesting that better discrete assignments can improve not only accuracy but also reasoning efficiency. Overall, these results show that \methodname{} extends beyond standard scalar quantization and can improve practical GGUF K-Quant checkpoints without changing their deployment format. We provide additional GGUF K-Quant results on Qwen3-4B in Appendix~\ref{app:qwen3-4b}, where \methodname{} shows similar gains over the corresponding Unsloth checkpoints.

\section{Conclusion}
\label{sec:conclusion}

We presented a sampling-based quantization method suggesting that the apparent divide between ``simple'' scalar quantization and more expressive vector- or trellis-based methods is smaller than previously believed. In the low-bit regime, much of the gap appears to be an \emph{optimization} gap rather than a fundamental limitation of deployment-friendly quantization formats. By turning per-weight grid assignment into a differentiable discrete optimization problem, \methodname{} substantially improves the accuracy of standard symmetric group-wise scalar quantization at 2 and 3 bits, while preserving full compatibility with existing scalar inference kernels and deployment stacks. We further showed that the same idea extends to GGUF K-Quant checkpoints, improving public local-deployment models.

Although the area of weight quantization is by now very well studied, our work presents a new combination of accuracy, simplicity, and kernel compatibility. On dense Llama models, \methodname{} consistently outperforms prior scalar baselines and closes most of the gap to state-of-the-art VQ methods; at ternary precision, it is already competitive with or better than several stronger-bit scalar alternatives. At the same time, because \methodname{} does not rely on learned codebooks or specialized decoding schemes, it remains practical at the scale of modern MoE models, where more expressive quantizers are difficult to train and deploy. 

More broadly, these results indicate that there is still substantial headroom in hardware-friendly scalar quantization, provided that the discrete optimization problem is treated seriously. An important direction for future work is to extend this idea beyond weight-only PTQ, for example to activation and KV-cache quantization, richer blockwise or task-aware objectives, and more efficient relaxations for even lower-bit or jointly quantized settings.

\section*{Acknowledgements}
The authors would like to thank Verda Cloud for computational support, and in particular Paul Chang for his consistent, prompt and generous help throughout the project. We acknowledge the use of Humming kernels developed by Jinzhen Lin and the Venus Team, Ant Group. The ISTA team was supported in part by the FWF Bilateral AI Center of Excellence, as well as a generous grant from the NVIDIA corporation. This work was also supported under project ID 40 as part of the Swiss AI Initiative, through a grant from the ETH Domain and computational resources provided by the Swiss National Supercomputing Centre (CSCS) under the Alps infrastructure.

\bibliographystyle{plainnat}
\bibliography{neurips_2026} 
%%%%%%%%%%%%%%%%%%%%%%%%%%%%%%%%%%%%%%%%%%%%%%%%%%%%%%%%%%%%

\newpage
\appendix

\section{Initialization details}
\label{app:initialization}
In the main text, we state that for the 2-bit case and higher bitwidths we initialize the logits using a Gaussian-like prior around the GPTQ solution. Here we give the details.

For each coordinate $i$ and candidate $k$, we set
\[
(\vl_\text{GPTQ})_{i,k} \propto -\frac{(c_k - \mu_i)^2}{2},
\qquad
\mu_i =
\begin{cases}
(\vq_\text{GPTQ})_i, & b = 2,\\
0, & b > 2.
\end{cases}
\]
Equivalently, candidates closer to $\mu_i$ receive larger initial logits. Here $c_k$ denotes the $k$-th candidate value. For $b=2$, the candidates are the quantization grid points,
\[
c_k \in \mathcal{G}_2,
\]
so centering at $\mu_i=(\vq_\text{GPTQ})_i$ directly favors the GPTQ-selected value. For $b>2$, the candidates are discrete index shifts,
\[
c_k \in \{-2,-1,0,1,2\}.
\]
In this case, the GPTQ solution is already encoded by the initial grid index $j_i^0$, and the candidate $c_k=0$ corresponds to keeping the GPTQ-assigned grid point. Thus centering at $\mu_i=0$ favors staying at the GPTQ solution while still assigning nonzero probability to nearby shifts.

Finally, for each coordinate, we subtract the mean logit and inject noise as in Equation~\ref{eq:logit_init} for the ternary case. This yields an initialization concentrated near the GPTQ solution while still allowing exploration of alternative candidates.

% ============================================================
% Compact version of Fig. 3.4: Local-shift parameterization
% ============================================================

\begin{figure}[t]
\centering
\resizebox{\linewidth}{!}{%
\begin{tikzpicture}[
  font=\footnotesize,
  >=Latex,
]

\definecolor{logitBlue}{RGB}{86,105,190}
\definecolor{gptqRed}{RGB}{200,55,70}
\definecolor{gsqGreen}{RGB}{46,150,92}
\definecolor{winOrange}{RGB}{200,130,60}
\definecolor{bgTint}{RGB}{215,224,245}
\definecolor{arrowGray}{RGB}{110,110,120}
\definecolor{moveArrow}{RGB}{105,105,115}

\tikzset{
  bar/.style         = {draw=logitBlue!85!black, line width=0.28pt, fill=logitBlue!75, rounded corners=0.35pt},
  bar gptq/.style    = {draw=gptqRed!65!black,   line width=0.32pt, fill=gptqRed!88,   rounded corners=0.35pt},
  bar gsq/.style     = {draw=gsqGreen!55!black,  line width=0.32pt, fill=gsqGreen!85,  rounded corners=0.35pt},
  tickline/.style    = {line width=0.42pt, black!75},
  tickfaded/.style   = {line width=0.30pt, black!25},
  gridaxis/.style    = {line width=0.46pt, black!75},
  gptqdot/.style     = {circle, fill=gptqRed,  inner sep=1.0pt, draw=gptqRed!40!black,  line width=0.22pt},
  gsqdot/.style      = {circle, fill=gsqGreen, inner sep=1.0pt, draw=gsqGreen!40!black, line width=0.22pt},
  window/.style      = {draw=winOrange, line width=0.68pt, dashed, rounded corners=2.0pt, dash pattern=on 1.8pt off 1.4pt},
  collabel/.style    = {font=\footnotesize\bfseries},
  rowlabel/.style    = {font=\footnotesize\bfseries, align=center, text=black!90},
  rowsub/.style      = {font=\scriptsize\itshape, text=black!55, align=center},
  trainarr/.style    = {-{Latex[length=2.6mm,width=2.0mm]}, line width=0.90pt, draw=arrowGray},
  movearr/.style     = {-{Latex[length=2.0mm,width=1.5mm]}, line width=0.75pt, draw=moveArrow},
  yaxislab/.style    = {font=\scriptsize\itshape, text=black!60, rotate=90},
  dotlab gptq/.style = {font=\scriptsize\bfseries, text=gptqRed!75!black,  inner sep=1pt},
  dotlab gsq/.style  = {font=\scriptsize\bfseries, text=gsqGreen!60!black, inner sep=1pt},
  leader/.style      = {line width=0.30pt, black!45},
}

% ---- compact geometry ----
\pgfmathsetmacro{\gridW}{4.85}
\pgfmathsetmacro{\gridH}{1.65}
\pgfmathsetmacro{\nticks}{16}
\pgfmathsetmacro{\dx}{\gridW/(\nticks-1)}
\pgfmathsetmacro{\barW}{\dx*0.60}
\pgfmathsetmacro{\rowSep}{3.35}
\pgfmathsetmacro{\colX}{6.55}
\pgfmathsetmacro{\rowLabelX}{-1.55}

% ---- tighter blue background ----
\begin{scope}[on background layer]
  \fill[bgTint, fill opacity=0.25, rounded corners=3pt]
    ({\rowLabelX - 0.85}, {-\rowSep - 0.72})
    rectangle
    ({\colX + \gridW + 0.42}, {\gridH + 1.08});
\end{scope}

\node[collabel] at (\gridW/2,         {\gridH + 0.82}) {Before Training};
\node[collabel] at (\colX + \gridW/2, {\gridH + 0.82}) {After Training};

% ============================================================
% Row 1: NAIVE
% ============================================================
\begin{scope}[yshift=0cm]
  \node[rowlabel] at (\rowLabelX, {\gridH/2 + 0.16}) {Naive};
  \node[rowsub]   at (\rowLabelX, {\gridH/2 - 0.24}) {$2^b$ logits\\per coord.};

  % ---- before ----
  \begin{scope}
    \foreach \i/\h in {
      1/0.020, 2/0.056, 3/0.135, 4/0.278, 5/0.487, 6/0.726, 7/0.923, 8/1.000,
      9/0.923, 10/0.726, 11/0.487, 12/0.278, 13/0.135, 14/0.056, 15/0.020, 16/0.008}
    {
      \pgfmathsetmacro{\x}{(\i-1)*\dx}
      \pgfmathsetmacro{\hh}{\h*\gridH}
      \ifnum\i=8
        \draw[bar gptq] (\x - \barW/2, 0) rectangle (\x + \barW/2, \hh);
      \else
        \draw[bar]      (\x - \barW/2, 0) rectangle (\x + \barW/2, \hh);
      \fi
    }
    \draw[gridaxis] (-0.14, 0) -- (\gridW + 0.14, 0);
    \foreach \i in {1,...,16} {
      \pgfmathsetmacro{\x}{(\i-1)*\dx}
      \draw[tickline] (\x, -0.015) -- (\x, -0.13);
    }
    \pgfmathsetmacro{\xinit}{7*\dx}
    \node[gptqdot] at (\xinit, 0) {};
    \node[dotlab gptq] at (\xinit, -0.36) {GPTQ};
    \node[yaxislab, anchor=south] at (-0.26, {\gridH/2}) {logit};
  \end{scope}

  % ---- after ----
  \begin{scope}[xshift=\colX cm]
    \foreach \i/\h in {
      1/0.005, 2/0.015, 3/0.038, 4/0.085, 5/0.170, 6/0.310, 7/0.525, 8/0.790,
      9/1.000, 10/0.790, 11/0.525, 12/0.310, 13/0.170, 14/0.085, 15/0.038, 16/0.015}
    {
      \pgfmathsetmacro{\x}{(\i-1)*\dx}
      \pgfmathsetmacro{\hh}{\h*\gridH}
      \ifnum\i=9
        \draw[bar gsq] (\x - \barW/2, 0) rectangle (\x + \barW/2, \hh);
      \else
        \draw[bar]     (\x - \barW/2, 0) rectangle (\x + \barW/2, \hh);
      \fi
    }
    \draw[gridaxis] (-0.14, 0) -- (\gridW + 0.14, 0);
    \foreach \i in {1,...,16} {
      \pgfmathsetmacro{\x}{(\i-1)*\dx}
      \draw[tickline] (\x, -0.015) -- (\x, -0.13);
    }
    \pgfmathsetmacro{\xinit}{7*\dx}
    \pgfmathsetmacro{\xsel}{8*\dx}
    \draw[movearr] (\xinit, 0.19)
      .. controls ({\xinit + 0.08}, 0.42) and ({\xsel - 0.08}, 0.42)
      .. (\xsel, 0.19);
    \node[gptqdot] at (\xinit, 0) {};
    \node[gsqdot]  at (\xsel,  0) {};
    \node[dotlab gptq, anchor=east] at ({\xinit - 0.06}, -0.36) {GPTQ};
    \node[dotlab gsq,  anchor=west] at ({\xsel  + 0.06}, -0.36) {GSQ};
    \draw[leader] ({\xinit - 0.05}, -0.28) -- (\xinit, -0.06);
    \draw[leader] ({\xsel  + 0.05}, -0.28) -- (\xsel,  -0.06);
  \end{scope}

  \draw[trainarr] ({\gridW + 0.42}, {\gridH*0.50}) -- ({\colX - 0.42}, {\gridH*0.50});
\end{scope}

% ============================================================
% Row 2: LOCAL SHIFT
% ============================================================
\begin{scope}[yshift=-\rowSep cm]
  \node[rowlabel] at (\rowLabelX, {\gridH/2 + 0.16}) {Local Shift};
  \node[rowsub]   at (\rowLabelX, {\gridH/2 - 0.24}) {5 logits\\per coord.};

  % ---- before ----
  \begin{scope}
    \foreach \i/\h in {6/0.25, 7/0.71, 8/1.00, 9/0.71, 10/0.25} {
      \pgfmathsetmacro{\x}{(\i-1)*\dx}
      \pgfmathsetmacro{\hh}{\h*\gridH}
      \ifnum\i=8
        \draw[bar gptq] (\x - \barW/2, 0) rectangle (\x + \barW/2, \hh);
      \else
        \draw[bar]      (\x - \barW/2, 0) rectangle (\x + \barW/2, \hh);
      \fi
    }
    \foreach \i/\lbl in {6/{\text{$-$}2}, 7/{\text{$-$}1}, 8/{0}, 9/{\text{$+$}1}, 10/{\text{$+$}2}} {
      \pgfmathsetmacro{\x}{(\i-1)*\dx}
      \node[font=\tiny, text=winOrange!65!black, inner sep=0.6pt]
        at (\x, {\gridH + 0.12}) {$\lbl$};
    }
    \node[font=\scriptsize\itshape, text=winOrange!65!black, anchor=west]
      at ({9*\dx + 0.42}, {\gridH + 0.12}) {shift $\delta$};

    \draw[gridaxis] (-0.14, 0) -- (\gridW + 0.14, 0);
    \foreach \i in {1,...,16} {
      \pgfmathsetmacro{\x}{(\i-1)*\dx}
      \ifnum\i<6
        \draw[tickfaded] (\x, -0.015) -- (\x, -0.13);
      \else\ifnum\i>10
        \draw[tickfaded] (\x, -0.015) -- (\x, -0.13);
      \else
        \draw[tickline]  (\x, -0.015) -- (\x, -0.13);
      \fi\fi
    }
    \pgfmathsetmacro{\xleft}{5*\dx - \barW/2 - 0.08}
    \pgfmathsetmacro{\xright}{9*\dx + \barW/2 + 0.08}
    \draw[window] (\xleft, -0.20) rectangle (\xright, {\gridH*1.01 + 0.04});

    \pgfmathsetmacro{\xinit}{7*\dx}
    \node[gptqdot] at (\xinit, 0) {};
    \node[dotlab gptq] at (\xinit, -0.40) {GPTQ};
    \node[yaxislab, anchor=south] at (-0.26, {\gridH/2}) {logit};
  \end{scope}

  % ---- after ----
  \begin{scope}[xshift=\colX cm]
    \foreach \i/\h in {6/0.04, 7/0.17, 8/0.48, 9/1.00, 10/0.48} {
      \pgfmathsetmacro{\x}{(\i-1)*\dx}
      \pgfmathsetmacro{\hh}{\h*\gridH}
      \ifnum\i=9
        \draw[bar gsq] (\x - \barW/2, 0) rectangle (\x + \barW/2, \hh);
      \else
        \draw[bar]     (\x - \barW/2, 0) rectangle (\x + \barW/2, \hh);
      \fi
    }
    \foreach \i/\lbl in {6/{\text{$-$}2}, 7/{\text{$-$}1}, 8/{0}, 9/{\text{$+$}1}, 10/{\text{$+$}2}} {
      \pgfmathsetmacro{\x}{(\i-1)*\dx}
      \node[font=\tiny, text=winOrange!65!black, inner sep=0.6pt]
        at (\x, {\gridH + 0.12}) {$\lbl$};
    }
    \node[font=\scriptsize\itshape, text=winOrange!65!black, anchor=west]
      at ({9*\dx + 0.42}, {\gridH + 0.12}) {shift $\delta$};

    \draw[gridaxis] (-0.14, 0) -- (\gridW + 0.14, 0);
    \foreach \i in {1,...,16} {
      \pgfmathsetmacro{\x}{(\i-1)*\dx}
      \ifnum\i<6
        \draw[tickfaded] (\x, -0.015) -- (\x, -0.13);
      \else\ifnum\i>10
        \draw[tickfaded] (\x, -0.015) -- (\x, -0.13);
      \else
        \draw[tickline]  (\x, -0.015) -- (\x, -0.13);
      \fi\fi
    }
    \pgfmathsetmacro{\xleft}{5*\dx - \barW/2 - 0.08}
    \pgfmathsetmacro{\xright}{9*\dx + \barW/2 + 0.08}
    \draw[window] (\xleft, -0.20) rectangle (\xright, {\gridH*1.01 + 0.04});

    \pgfmathsetmacro{\xinit}{7*\dx}
    \pgfmathsetmacro{\xsel}{8*\dx}
    \draw[movearr] (\xinit, 0.19)
      .. controls ({\xinit + 0.08}, 0.42) and ({\xsel - 0.08}, 0.42)
      .. (\xsel, 0.19);
    \node[gptqdot] at (\xinit, 0) {};
    \node[gsqdot]  at (\xsel,  0) {};
    \node[dotlab gptq, anchor=east] at ({\xinit - 0.06}, -0.40) {GPTQ};
    \node[dotlab gsq,  anchor=west] at ({\xsel  + 0.06}, -0.40) {GSQ};
    \draw[leader] ({\xinit - 0.05}, -0.31) -- (\xinit, -0.06);
    \draw[leader] ({\xsel  + 0.05}, -0.31) -- (\xsel,  -0.06);
  \end{scope}

  \draw[trainarr] ({\gridW + 0.42}, {\gridH*0.50}) -- ({\colX - 0.42}, {\gridH*0.50});
\end{scope}

\end{tikzpicture}%
}
\caption{\textbf{Local-shift parameterization at higher bit-widths.}
Each row shows, for a single weight coordinate, the logit distribution over
candidate grid points before and after training. The red bar and dot mark
the GPTQ-initialized grid point used to warm-start the logits; the green
bar and dot mark the grid point selected by \methodname{} after training.
\emph{Top (naive):} placing one trainable logit on every grid point costs
$2^b$ logits per coordinate, which quickly becomes prohibitive as $b$ grows.
\emph{Bottom (local shift):} we instead assign logits only to a discrete
shift $\delta_i \in \{-2,-1,0,+1,+2\}$ relative to the GPTQ-initialized
grid index (dashed window), reducing the per-coordinate parameter count
from $2^b$ to $5$. In both cases the distribution is initialized as a
Gaussian centered at the GPTQ solution.}
\label{fig:local-shift}
\end{figure}

\section{Vanishing gradients in saturated Gumbel-Softmax relaxations}
\label{app:vanishing_logit_grads}

We explain why gradients through a low-temperature softmax relaxation can become exponentially small when the logits are well separated. Let $a\in\mathbb{R}^K$ denote the logits and let $y=\mathrm{softmax}(a/\tau)$, where $\tau>0$ is the temperature. Suppose that $a$ has a unique maximizer
$i^\star=\arg\max_i a_i$, and define the logit gap
$\Delta \coloneqq a_{i^\star} - \max_{j\neq i^\star} a_j > 0$. As $\tau$
decreases, or as $\Delta$ increases, the relaxed categorical vector $y$ becomes increasingly close to the one-hot vector $e_{i^\star}$. This saturation causes the Jacobian of the softmax to shrink.

\begin{lemma}
\label{lem:vanishing_logit_grads}
Let $a\in\mathbb{R}^K$ have a unique maximizer $i^\star=\arg\max_i a_i$, and define $\Delta \coloneqq a_{i^\star}-\max_{j\neq i^\star} a_j>0$. For $\tau>0$, let $y=\mathrm{softmax}(a/\tau)$. Suppose $\ell$ is differentiable at $y$ and $\|\nabla_y \ell(y)\|_1 \le G$. Then
\[\|\nabla_a \ell(y)\|_\infty \le \frac{G}{\tau}\bigl(1-y_{i^\star}\bigr) \le \frac{(K-1)G}{\tau} e^{-\Delta/\tau}.
\]
Consequently, for fixed $\Delta>0$, $\|\nabla_a \ell(y)\|_\infty \to 0$ as
$\tau\to 0$, and for fixed $\tau>0$, $\|\nabla_a \ell(y)\|_\infty \to 0$ as
$\Delta\to\infty$.
\end{lemma}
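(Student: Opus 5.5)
We will prove this with the chain rule through the softmax Jacobian. Since $y=\mathrm{softmax}(a/\tau)$, we have $\partial y_j/\partial a_i = \tau^{-1} y_j(\delta_{ij}-y_i)$. Writing $g=\nabla_y \ell(y)$, this gives
\[
\frac{\partial \ell}{\partial a_i} = \frac{1}{\tau}\, y_i\Bigl(g_i - \sum_j y_j g_j\Bigr) = \frac{1}{\tau}\sum_j y_j\,(\delta_{ij}-y_i)\,g_j .
\]
The plan is to bound each entry of $\nabla_a \ell$ by $\tfrac{G}{\tau}$ times the largest entry magnitude of the Jacobian, and then to control that quantity by $1-y_{i^\star}$.

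For the first inequality, Hölder's inequality gives $|\partial \ell/\partial a_i| \le \tau^{-1}\|g\|_1 \max_j |y_j(\delta_{ij}-y_i)|$. So it suffices to show $\max_{i,j}|y_j(\delta_{ij}-y_i)|\le 1-y_{i^\star}$, and we check this case by case.
\begin{itemize}
\item Diagonal terms, $i=j$: the entry is $y_i(1-y_i)$. For $i=i^\star$ this is at most $1-y_{i^\star}$. For $i\neq i^\star$ it is at most $y_i \le 1-y_{i^\star}$, since the $y$'s sum to one.
\item Off-diagonal terms, $i\ne j$: the entry is $y_iy_j$. At least one of $i,j$ differs from $i^\star$, say $j$, so $y_iy_j\le y_j\le 1-y_{i^\star}$.
\end{itemize}
This yields $\|\nabla_a\ell\|_\infty\le \frac{G}{\tau}(1-y_{i^\star})$.

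For the second inequality, we write $1-y_{i^\star}=\sum_{j\ne i^\star} y_j$ and use $y_j = e^{a_j/\tau}/\sum_k e^{a_k/\tau}\le e^{(a_j-a_{i^\star})/\tau}\le e^{-\Delta/\tau}$. Summing over the $K-1$ indices $j\ne i^\star$ gives $1-y_{i^\star}\le (K-1)e^{-\Delta/\tau}$.

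The limits then follow directly. For fixed $\tau$, the bound $e^{-\Delta/\tau}\to0$ as $\Delta\to\infty$. For fixed $\Delta$, $\tau^{-1}e^{-\Delta/\tau}\to0$ as $\tau\to0$, because exponential decay dominates the $1/\tau$ factor.

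The only delicate step is the entrywise Jacobian bound. A naive bound of $y_i$ for the $i=i^\star$ entries would fail, so we must always charge each entry to a non-maximal coordinate, as done in the case split above.
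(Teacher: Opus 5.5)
Your proposal is correct and follows essentially the same route as the paper: the chain rule through the softmax Jacobian, an $\ell_1$--$\ell_\infty$ (H\"older) bound reducing to $\max_k y_k|\mathbf{1}_{k=i}-y_i|\le 1-y_{i^\star}$ via a case split, and then $1-y_{i^\star}\le\sum_{j\neq i^\star}e^{(a_j-a_{i^\star})/\tau}\le (K-1)e^{-\Delta/\tau}$. Your off-diagonal case, which charges $y_iy_j$ to whichever index is not $i^\star$, also explicitly covers the entries $y_k y_{i^\star}$ with $k\neq i^\star$ in the row $i=i^\star$, which the paper's write-up leaves implicit.
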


\begin{proof}
Let $g=\nabla_y\ell(y)$. The softmax Jacobian satisfies $\partial y_k/\partial a_i = \tau^{-1}y_k(\mathbf{1}_{k=i}-y_i)$, so
\[\frac{\partial \ell}{\partial a_i}=\frac{1}{\tau}\sum_{k=1}^K g_k y_k(\mathbf{1}_{k=i}-y_i).\]
Taking absolute values gives
\[\left|\frac{\partial \ell}{\partial a_i}\right|\le\frac{\|g\|_1}{\tau}\max_k y_k|\mathbf{1}_{k=i}-y_i|.\]
We now bound the final factor. If $i=i^\star$, then
$y_{i^\star}(1-y_{i^\star})\le 1-y_{i^\star}$. If $i\neq i^\star$, then
$y_i(1-y_i)\le y_i\le 1-y_{i^\star}$, and for $k\neq i$,
$y_k y_i \le y_i \le 1-y_{i^\star}$. Thus, for every $i$,
\[\left|\frac{\partial \ell}{\partial a_i}\right|\le\frac{G}{\tau}(1-y_{i^\star}),\]
which implies $\|\nabla_a \ell(y)\|_\infty \le\frac{G}{\tau}(1-y_{i^\star})$. Finally,
\[1-y_{i^\star}=\frac{\sum_{j\neq i^\star}\exp((a_j-a_{i^\star})/\tau)}{1+\sum_{j\neq i^\star}\exp((a_j-a_{i^\star})/\tau)}\le\sum_{j\neq i^\star}\exp((a_j-a_{i^\star})/\tau)\le(K-1)e^{-\Delta/\tau}.\]
The limiting statements follow immediately.
\end{proof}

The lemma shows that the gradient with respect to the logits is controlled by the probability mass assigned to non-maximal categories. Once the relaxation is nearly one-hot, this tail mass is exponentially small in $\Delta/\tau$. Thus, even if the downstream loss has a nonzero gradient with respect to the relaxed variable $y$, the gradient propagated back to the logits can become negligible.

This effect is relevant for optimizer choice. AdamW rescales gradients using an estimate of their second moment and includes an additive numerical constant $\epsilon$ in the denominator. In the saturated regime, the raw gradients passed to the optimizer may be extremely small, so the effective update can become highly sensitive to $\epsilon$ and may shrink enough to stall progress. Increasing $\epsilon$ can partially counteract this behavior, but it introduces an additional tuning choice and may be problem dependent.

Lion instead uses a sign-based update computed from momentum. As a result, its update direction is less directly tied to the absolute magnitude of the instantaneous gradient. This does not remove the saturation phenomenon itself: if the gradient is exactly zero, no first-order optimizer receives useful local information. However, when gradients are nonzero but exponentially small, Lion is less sensitive to their magnitude collapse than AdamW, which makes it a natural choice for training with saturated Gumbel-Softmax relaxations.

\paragraph{Ablation: AdamW vs. Lion} We have theoretically shown that AdamW is not well suited to our setting, as it can effectively stall due to the vanishing gradient magnitudes of the logit parameters, i.e., the parameters associated with discrete assignments. In this subsection, we empirically validate this observation.

Tuning the learning rate for AdamW in this setting is particularly challenging. With a small learning rate, similar to the one used for Lion, the logit parameters initially make progress while the temperature is still sufficiently high. However, after a few epochs, as the temperature decreases and the gradient magnitudes shrink, the logits tend to stall and fail to converge to a clear discrete choice. As a result, the final assignment can become effectively random, since the probabilities of different choices remain close to one another. To mitigate this issue, AdamW requires a much larger learning rate for the logits so that they can converge early in training, before the gradients become too small. This, however, makes the optimization less stable and harder to tune.

Table~\ref{tab:ablation_optimizer} compares AdamW and Lion on Llama-3.1-8B-Instruct under 2-bit weight-only quantization. Lion consistently outperforms AdamW across all evaluated benchmarks, improving the average accuracy from $61.84$ to $65.54$, a gain of $3.70$ points. The largest improvements are observed on WinoGrande and HellaSwag, where Lion improves over AdamW by $4.73$ and $4.55$ points, respectively. These results support our theoretical analysis: Lion provides a more reliable optimization behavior for the logit parameters and leads to better discrete assignment decisions.

\begin{table}[h]
\centering
\caption{
Ablation of the optimizer choice on Llama-3.1-8B-Instruct at 2-bit weight-only quantization.
}
\label{tab:ablation_optimizer}
\begin{tabular}{lcccccc}
\toprule
Optimizer & ARC-C & ARC-E & Hella. & PIQA & Wino. & Avg. \\
\midrule
AdamW & 40.61 & 69.49 & 62.15 & 73.50 & 63.46 & 61.84 \\
Lion & \textbf{44.20} & \textbf{72.18} & \textbf{66.70} & \textbf{76.44} & \textbf{68.19} & \textbf{65.54} \\
\bottomrule
\end{tabular}
\end{table}

\section{Details of within-block staging}
\label{app:within_block_staging}

For Llama and Qwen models, we optimize the quantized layers within each Transformer block using a staged procedure rather than optimizing all layers jointly under a single block reconstruction loss. Joint optimization is the most direct formulation and has been used in prior work~\citep{aqlm,chen2025efficientqat}. However, we found it to be suboptimal in our setting.

One reason is that block reconstruction is only a surrogate for final quantized-model quality. The loss measures the discrepancy between the output of a floating-point block and the output of its quantized counterpart, but this signal is not equally informative for all layers in the block. For layers that appear early in the computation graph, such as the query, key, and value projections, their effect on the block output is mediated by subsequent attention operations, the residual pathway, and the MLP. Consequently, the block-level loss provides a relatively indirect optimization signal for these layers. Later layers, such as the MLP projections, are more directly exposed to the block output loss. This imbalance can make fully joint optimization less effective than optimizing smaller groups of layers under reconstruction losses that are closer to their immediate outputs. This intuition is consistent with the staged strategies used by \citet{quipsharp, qtip}, which also partition the block and optimize one group at a time rather than all layers jointly.

Our staging strategy therefore partitions the block according to the structure of the Transformer computation. Ideally, the query and key projections would be optimized jointly, since the relevant quantity for attention is their interaction through the attention logits rather than the reconstruction of either projection in isolation. In larger models, however, this joint optimization is expensive. As a cheaper approximation, we first optimize the query and key projections independently, each under its own linear reconstruction loss. We then freeze these projections and optimize the value and output projections jointly under a self-attention output reconstruction loss. Finally, after freezing the attention projections, we optimize the MLP projections under the full block reconstruction loss.

\section{Discussion on the effect of calibration data}
\label{sec:exp_kimi26}

The Kimi-K2.5 results in
Table~\ref{tab:kimi_combined}
show a noticeably larger drop on GPQA Diamond than on the mathematics and coding benchmarks. One plausible explanation is the calibration data: Kimi-K2.5 was calibrated only on OpenThoughts, which is concentrated on mathematics and code. This may help preserve performance on MATH500 and LiveCodeBench, while providing weaker coverage of science-heavy domains such as GPQA Diamond, which includes questions from biology, chemistry, and physics. We therefore interpret the GPQA degradation as evidence that calibration-data composition can be important for low-bit quantization of large instruction-tuned models.

To further examine this effect, we evaluate \methodname{} on Kimi-K2.6 using a more diverse calibration mixture consisting of OpenThoughts, OpenPlatypus \citep{lee2023platypus}, and FineWeb-Edu. The goal is to retain mathematical reasoning coverage while adding broader knowledge and science-oriented text. Since Kimi-K2.6 is a recent model, we observed evaluation instabilities on some tasks that we could not conclusively attribute to either the checkpoint or the vLLM execution stack. We therefore report only the tasks for which evaluation was stable: AIME25, GPQA Diamond, and MATH500.

Table~\ref{tab:kimi26_main} shows that \methodname{} remains strong at 2 bits. The quantized model stays close to the base model on AIME25 and GPQA Diamond, with drops of 2.22 and 4.05 points, respectively, while slightly improving MATH500 from 92.67 to 93.67. Compared with the Kimi-K2.5 results, the GPQA Diamond gap is substantially smaller. Although this is not a controlled calibration ablation, since the base model also changes, the result is consistent with the hypothesis that more diverse calibration data helps preserve broader reasoning and scientific QA capabilities under aggressive quantization.

\begin{table}[t]
\centering
\small
\setlength{\tabcolsep}{7pt}
\caption{Additional Kimi-K2.6 results with a more diverse calibration mixture.}
\begin{tabular}{lcccc}
\toprule
\textbf{Method} & bit/param & AIME25 & GPQA Diamond & MATH500 \\
\midrule
Base & 4.5 & \textbf{90.00} & \textbf{87.21} & 92.67 \\
\methodname{} (Ours) & 2.13 & 87.78 & 83.16 & \textbf{93.67} \\
\bottomrule
\end{tabular}
\label{tab:kimi26_main}
\end{table}

\section{Details of GGUF K-Quant extension}
\label{app:gguf}

The key idea is to treat an existing GGUF checkpoint as the initialization point for discrete optimization. Given a pretrained K-Quant model, each quantized weight is first associated with its current GGUF code or grid index. When the corresponding discrete choice has at most four possible values, as in ternary or 2-bit quantization, we optimize over all possible outcomes directly using the full Gumbel-Softmax relaxation. For larger discrete sets, we instead use the local-shift formulation introduced earlier: rather than assigning logits to all possible code values, we introduce a small discrete shift around the initialized GGUF index. During optimization, Gumbel-Softmax sampling selects either a relaxed code value or a relaxed local shift, depending on the size of the discrete set, while the final model is projected back into the same GGUF K-Quant format. Thus, \methodname{} does not require changing the deployment representation: the output remains a standard K-Quant model and can be served using the same GGUF-compatible inference stack. In this sense, \methodname{} acts as a post-processing optimizer for existing GGUF checkpoints, improving the discrete assignments while preserving the original storage and execution format.

\section{Limitations}

\label{app:limitations}

While \methodname{} improves the accuracy of low-bit scalar quantization and preserves deployment-friendly formats, it has several limitations. First, the method introduces auxiliary logits during optimization, with memory overhead proportional to the number of candidate grid values considered per weight. Although this overhead is temporary and does not affect the final deployed model, it makes full-model end-to-end optimization impractical for large LLMs and motivates our block-wise, expert-wise, and local-shift training procedures.

Second, \methodname{} depends on a good initialization. In our experiments we warm-start from GPTQ or from an existing GGUF checkpoint, and the local-shift formulation assumes that most useful assignment changes remain near the initialized grid point. This assumption works well empirically, but it may restrict recovery when the initialization is poor or when globally different assignments are needed. More adaptive shift ranges or multi-stage re-initialization strategies may improve robustness.

Finally, our results do not imply that scalar quantization universally matches vector- or trellis-based methods. These methods remain more expressive at a fixed bit-width and may retain an advantage in regimes where codebook lookup or specialized kernels are acceptable. Rather, our results show that a carefully optimized scalar quantizer can close much of the gap while preserving a simpler and more portable deployment format.

\section{Additional experimental details}
\label{app:extra_exp}

\subsection{Full training hyperparameters}
\label{app:train_hparams}

Tables~\ref{tab:hparams_llama}, \ref{tab:hparams_kimi}, and \ref{tab:hparams_qwen} summarize the training hyperparameters used in our block-wise optimization experiments.

For the dense Llama and Qwen models, we use 20 epochs of block-wise optimization. For Kimi models, we use only 10 epochs. Although Kimi models are much larger overall, the optimization problem is decomposed across 384 experts and solved independently for each expert. As a result, each individual optimization problem is substantially smaller than in the dense Llama models, which makes fewer epochs sufficient in practice. Additionally, based on the discussion in the previous subsection, for Qwen models we use a calibration mixture consisting of OpenThoughts, OpenPlatypus, and FineWeb-Edu.

There is one additional stabilization detail for Llama-3.1-70B. For this model only, we apply gradient clipping during training, and only to the logits of the discrete parameters, not to the group scales. The clipping threshold is set to $10^{-6}$ for the 2-bit setting and $10^{-8}$ for the 3-bit setting.

\begin{table*}[t]
\centering
\small
\setlength{\tabcolsep}{4pt}
\caption{Training hyperparameters for the Llama experiments.}
\resizebox{\textwidth}{!}{
\begin{tabular}{c|c|c|c|c|c|c|c|c|c|c|c|c|c}
\toprule
Bit-width & Logits lr & Group scales lr & Weight decay & Betas & Epochs &\# Seqs. & Seq. len. & Batch size & Group size & $\tau$ schedule & $\kappa$ schedule & $\alpha$ & std\\
\midrule
$1.58$-bit
& 1e-4
& 5e-5
& $1.0$
& $(0.9, 0.95)$
& 20
& 4096
& 4096
& 64
& 128
& linear: $2 \rightarrow 0.05$
& linear: $100 \rightarrow 500$
& 3
& $0.01$
\\
\midrule
$2/3$-bit
& 1e-4
& 5e-5
& $1.0$
& $(0.9, 0.95)$
& 20
& 4096
& 4096
& 64
& 128
& linear: $2 \rightarrow 0.05$
& linear: $100 \rightarrow 500$
& 6
& $0.01$
\\
\bottomrule
\end{tabular}
}
\label{tab:hparams_llama}
\end{table*}

\begin{table*}[t]
\centering
\small
\setlength{\tabcolsep}{4pt}
\caption{Training hyperparameters for Kimi experiments.}
\resizebox{\textwidth}{!}{
\begin{tabular}{c|c|c|c|c|c|c|c|c|c|c|c|c|c}
\toprule
Bit-width & Logits lr & Group scales lr & Weight decay & Betas & Epochs & \# Seqs. & Seq. len. & Batch size & Group size & $\tau$ schedule & $\kappa$ schedule & $\alpha$ & std \\
\midrule
$2$-bit
& 2e-4
& 1e-5
& $1.0$
& $(0.9, 0.95)$
& 10
& 4096
& 4096
& 64
& 128
& linear: $2 \rightarrow 0.05$
& linear: $100 \rightarrow 500$
& 6
& $0.01$
\\
\bottomrule
\end{tabular}
}
\label{tab:hparams_kimi}
\end{table*}

\begin{table*}[t]
\centering
\small
\setlength{\tabcolsep}{4pt}
\caption{Training hyperparameters for the Qwen experiments.}
\resizebox{\textwidth}{!}{
\begin{tabular}{c|c|c|c|c|c|c|c|c|c|c|c|c|c}
\toprule
Bit-width & Logits lr & Group scales lr & Weight decay & Betas & Epochs &\# Seqs. & Seq. len. & Batch size & Group size & $\tau$ schedule & $\kappa$ schedule & $\alpha$ & std\\
\midrule
Q2\_K
& 1e-4
& 2e-5
& $1.0$
& $(0.9, 0.95)$
& 20
& 4096
& 4096
& 64
& 128
& linear: $2 \rightarrow 0.05$
& linear: $100 \rightarrow 500$
& 6
& $0.01$
\\
\midrule
Q3/4/5\_K
& 1e-4
& 0
& $1.0$
& $(0.9, 0.95)$
& 20
& 4096
& 4096
& 64
& 128
& linear: $2 \rightarrow 0.05$
& linear: $100 \rightarrow 500$
& 6
& $0.01$
\\
\bottomrule
\end{tabular}
}
\label{tab:hparams_qwen}
\end{table*}

\subsection{Evaluation}
We evaluate all models with \texttt{lm-eval-harness}~\citep{eval-harness} in the zero-shot setting, using a maximum sequence length of 4096. For the Llama models, we report accuracy on ARC-Easy, ARC-Challenge, HellaSwag, WinoGrande, and PIQA~\citep{clark2018think, zellers2019hellaswag, sakaguchi2021winogrande, bisk2020piqa}. These are standard zero-shot reasoning and commonsense benchmarks widely used in the quantization literature, and together they cover multi-choice scientific reasoning, commonsense completion, and physical reasoning.

For Kimi-K2.5 model, we additionally focus on long-context and reasoning evaluations. More specifically, for long-context, we evaluate models on OpenAI-MRCR \citep{vodrahalli2024michelangelo} benchmark across all sequence-length buckets from 0 to 256k, which is the model's maximum supported sequence length. For each sequence length bucket, we report average \texttt{pass@1} score over 5 repetitions. As for reasoning evaluations, we focus on AIME25 \citep{aime25}, GPQA:Diamond \citep{rein2024gpqa}, MATH500 \citep{lightman2023let}, and LiveCodeBench-v6 \citep{jain2024livecodebench}. We report the average \texttt{pass@1} score: over 10 repetitions for AIME25 and LiveCodeBench-v6, and over 5 repetitions for GPQA:Diamond and MATH500. For both, long-context and reasoning evaluations, we follow Kimi's suggested sampling parameters: \texttt{temperature=1.0} and \texttt{top\_p=0.95}.

\subsection{Randomness and reproducibility}
\methodname{} is stochastic due to Gumbel-Softmax sampling. However, repeated runs on early-layer block-wise reconstruction show low variance in the optimization loss, hence given the cost of 70B- and MoE-scale quantization, we report a single run per configuration. All experiments were conducted primarily on a node with $8\times$H200.

\subsection{End-to-end compression runtime}

Table~\ref{tab:runtimes} summarizes the end-to-end runtime of \methodname{} across different models using $8\times$H200 GPUs. As expected, the total runtime increases with model size. The 8B model completes in 10 hours, while Llama-3.1-70B-Instruct requires 68 hours. Interestingly, although Kimi-K2.5 is substantially larger than Llama-3.1-70B-Instruct, it completes in only 24 hours. This is mainly due to two factors. First, we do not quantize the attention layers for Kimi-K2.5, which decreases its computational cost. Second, we train Kimi-K2.5 for fewer epochs, which further reduces its total runtime. Overall, these results show that \methodname{} remains practical at different model scales, although larger models introduce a substantially higher runtime overhead.

\begin{table*}[t]
\centering
%\small
%\setlength{\tabcolsep}{5pt}
\begin{tabular}{lc}
\toprule
Model & Runtime \\
\midrule
Llama-3.1-8B-Instruct & 10 hours \\
Llama-3.1-70B-Instruct & 68 hours \\
Kimi-K2.5 & 24 hours \\
\bottomrule
\end{tabular}
\caption{Total runtime of \methodname{} for different models on $8\times$H200 GPUs.}
\label{tab:runtimes}
\end{table*}

\section{Effect of end-to-end scale-only fine-tuning}
\label{app:scale_ft}

Table~\ref{tab:scale_ft} isolates the contribution of the end-to-end scale-only fine-tuning stage for the 2-bit Llama models. In this stage, the discrete assignments found by block-wise optimization are kept fixed, and only the per-group scales are updated using the distillation objective described in Section~\ref{sec:exp_llama}. This ablation shows how much additional performance can be recovered by a lightweight global refinement after the block-wise discrete search.

\begin{table*}[t]
\centering
%\small
\setlength{\tabcolsep}{3.5pt}
\caption{Effect of end-to-end scale-only fine-tuning on the 2-bit GSQ models.}
\resizebox{\textwidth}{!}{%
\begin{tabular}{l|cccccc|cccccc}
\toprule
& \multicolumn{6}{c|}{\textbf{Llama-3.1-8B-Instruct}} & \multicolumn{6}{c}{\textbf{Llama-3.1-70B-Instruct}} \\
\textbf{Setting}
& ARC-C & ARC-E & Hella. & PIQA & Wino. & Avg.
& ARC-C & ARC-E & Hella. & PIQA & Wino. & Avg. \\
\midrule
GSQ, block-wise only
& 44.20 & 72.18 & 66.70 & 76.44 & 68.19 & 65.54
& 57.25 & 79.71 & 78.08 & 80.09 & 77.11 & 74.45 \\
GSQ, + scale fine-tuning
& 48.12 & 72.35 & 73.42 & 78.07 & 70.80 & 68.55
& 58.87 & 79.55 & 82.11 & 81.07 & 76.24 & 75.57 \\
$\Delta$
& +3.92 & +0.17 & +6.72 & +1.63 & +2.61 & +3.01
& +1.62 & -0.16 & +4.03 & +0.98 & -0.87 & +1.12 \\
\bottomrule
\end{tabular}%
}
\label{tab:scale_ft}
\end{table*}

The results in Table~\ref{tab:scale_ft} show that the block-wise stage already captures the main benefit of the discrete optimization, while a single end-to-end pass that updates only the scales can provide an additional refinement without revisiting the discrete assignments.

\section{Connection to MaskLLM: a 2:4 sparsity comparison}
\label{app:maskllm_compare}

Our method is directly inspired by MaskLLM~\citep{maskllm}. Conceptually, \methodname{} extends the same discrete optimization viewpoint from structured sparsity to low-bit quantization, while also changing the optimization granularity from end-to-end training to block-wise training. Since the latter is substantially cheaper, it is natural to ask whether the block-wise formulation remains competitive even in the original setting for which MaskLLM was designed.

To answer this, we perform an additional experiment on Llama-2-7B in the original 2:4 structured sparsity setting of MaskLLM. This experiment isolates the optimization strategy from the representation format: rather than comparing sparsity to quantization, we compare end-to-end MaskLLM training to our block-wise formulation on the same sparsity task. Importantly, the two methods do not optimize exactly the same variables. MaskLLM learns only the binary sparsity mask while keeping the underlying dense weights fixed at their original pretrained values. In contrast, in our formulation we jointly optimize both the mask and the weight values. Therefore, this comparison should not be interpreted as a strictly matched ablation, but rather as evidence that the proposed optimization framework remains effective, and in practice stronger, even when applied back to the structured sparsity setting that originally motivated MaskLLM. As shown in Table~\ref{tab:maskllm_compare}, the block-wise variant yields a better average zero-shot score across the same five tasks used in the main text.

\begin{table}[t]
\centering
%\small
%\setlength{\tabcolsep}{4.2pt}
\caption{Comparison to MaskLLM on the original 2:4 structured sparsity task of~\citet{maskllm}, evaluated on Llama-2-7B.}
\scalebox{1.0}{
\begin{tabular}{l|cccccc}
\toprule
\textbf{Method} & ARC-C & ARC-E & Hella. & PIQA & Wino. & Avg. \\
\midrule
FP16/BF16  & 46.25 & 74.58 & 75.98 & 79.11 & 69.14 & 69.01 \\
\midrule
MaskLLM (end-to-end, mask-only) & 37.97 & 64.98 & 68.27 & 76.22 & 65.19 & 62.53  \\
\methodname{} (block-wise, mask + weights)  & 40.02 & 68.18 & 65.31 & 75.35 & 65.43 & 62.86 \\
\bottomrule
\end{tabular}}
\label{tab:maskllm_compare}
\end{table}

\section{Qwen3-4B GGUF results}
\label{app:qwen3-4b}

Table~\ref{tab:qwen3_4b_gguf} reports GGUF K-Quant results for Qwen3-4B. 
Although these results are included mainly for completeness, they show a consistent advantage for \methodname{} over the Unsloth baselines at both Q3\_K\_M and Q2\_K quantization levels. The improvement is especially pronounced under the more aggressive Q2\_K setting, where \methodname{} substantially recovers accuracy across all evaluated benchmarks while using comparable or fewer tokens on GPQA Diamond and MMLU-Pro.

\begin{table*}[t]
\centering
\setlength{\tabcolsep}{5pt}
\caption{GGUF K-Quant results on Qwen3-4B. Tokens include text and reasoning tokens.}
%\resizebox{\textwidth}{!}{%
\small
\begin{tabular}{l|cc|cc|cc|c}
\toprule
\multirow{2}{*}{Method}
& \multicolumn{2}{c|}{AIME25}
& \multicolumn{2}{c|}{GPQA Diamond}
& \multicolumn{2}{c|}{MMLU-Pro}
& \multirow{2}{*}{Avg.} \\
& Acc. & Tokens 
& Acc. & Tokens 
& Acc. & Tokens 
& \\
\midrule
BF16
& 62.00 & 0.48M
& 53.74 & 1.29M
& 67.69 & 30.02M
& 61.14 \\
\midrule
Unsloth-Q3\_K\_M
& 55.00 & 0.60M
& 47.98 & \textbf{1.32M}
& 64.11 & 34.00M
& 55.70 \\
\methodname{}-Q3\_K\_M
& \textbf{55.67} & \textbf{0.51M}
& \textbf{52.53} & \textbf{1.32M}
& \textbf{65.28} & \textbf{31.69M}
& \textbf{57.83} \\
\midrule
Unsloth-Q2\_K
& 23.00 & \textbf{0.51M}
& 28.59 & 2.17M
& 53.43 & 45.97M
& 35.01 \\
\methodname{}-Q2\_K
& \textbf{45.33} & 0.60M
& \textbf{40.10} & \textbf{1.93M}
& \textbf{59.60} & \textbf{44.69M}
& \textbf{48.34} \\
\bottomrule
\end{tabular}%
%}
\label{tab:qwen3_4b_gguf}
\end{table*}

\section{Kimi-K2.5 additional results}
\label{sec:exp_agentic}

\begin{table}[t]
\centering
\caption{Results for Kimi-K2.5 for average performance ($\uparrow$) across agentic benchmarks, spanning: Operating System (OS) Tasks, Database (DB) Task, Knowledge Graph (KG) Tasks, Web Shopping (WS) Tasks and ALFWorld, an Text Adventure Game (Text Game / House-Holding, HH). Reported values show mean score of 3 runs.}
\begin{tabular}{l|c|c|c|c|c}
\toprule
Method & OS & DB & KG & AlfWorld & WS \\
\midrule
Base & 50 & 69 & 57 & 53 & 52 \\ \midrule
Unsloth\_UD-IQ1\_M (2.4bit) & 40 & 61 & 39 & 41 & 47 \\
GSQ (2.37bit) & \textbf{43} & \textbf{63} & \textbf{45} & \textbf{42} & \textbf{49} \\
\bottomrule
\end{tabular}
\label{tab:agentic_benchmark}
\end{table}

Since Kimi-K2.5 is a “native multimodal agentic model” according to their model card \cite{team2026kimi}, we wanted to check whether our quantization method preserves its ability to perform agentic tasks. For this, we evaluate Kimi-K2.5, our 2-bit version, and a similarly sized and bpw-comparable Unsloth model on AgentBench \cite{liu2025agentbenchevaluatingllmsagents}. We used the most current version as introduced in \cite{zhang2025agentrlscalingagenticreinforcement}, which consists of 5 agentic tasks spanning Operating System (OS) tasks, Database (DB) tasks, Knowledge Graph (KG) tasks, Web Shopping (WS) tasks, and ALFWorld, a text adventure game (Text Game / House-Holding, HH). We evaluate all models with 3 runs on each benchmark and report mean. As shown in Table \ref{tab:agentic_benchmark}, we remain competitive compared to the baseline on database, operating system, and web shopping tasks, which are among the most widely adopted settings. For tasks involving knowledge graphs and textual games, we perform worse, although this could heavily depend on the calibration data.
Additionally, we outperform Unsloth's model on every task while maintaining a comparable overall model size. However, we note that Unsloth's model uses non-uniform quantization and quantizes every layer, whereas our approach only quantizes the experts. Therefore, although the total model size is roughly equivalent, our method effectively operates at a lower average bitwidth within the expert layers.

\paragraph{Limitations.} Furthermore, we want to highlight that many issues occurred while evaluating GGUF models, making it infeasible to run more challenging evaluations with them. Templating often failed, and evaluating Kimi-K2.5 became extremely slow and impractical for evaluation purposes. Therefore, we focused on the most practical evaluations and skipped the others.

\section{Ablations}
\subsection{Naive block optimization}
\label{app:ablation_block_optimization}

In the main experiments, we do not optimize all quantized tensors in a Transformer block jointly under a single block reconstruction loss. Instead, we use the staged within-block schedule described in Section~4.1.

To validate this choice, we compare against two simpler alternatives on Llama-3.1-8B-Instruct at 2-bit weight-only quantization. The first baseline, \emph{Only-linear}, skips block-level optimization entirely and optimizes each linear layer only with its own layerwise reconstruction loss. The second baseline, \emph{Block-all}, performs the most direct form of block optimization: all quantized tensors inside a Transformer block are optimized jointly using the block output reconstruction loss. Since these ablations are designed to isolate the effect of the block-wise optimization schedule, all numbers in this table are reported \emph{before} the final end-to-end scale-only fine-tuning stage. For the staged GSQ result, we therefore use the ``GSQ, block-wise only'' setting from Table~\ref{tab:scale_ft}, rather than the final scale-tuned result used in the main comparison.

\begin{table}[t]
\centering
\caption{
Ablation of the optimization objective and within-block schedule on Llama-3.1-8B-Instruct at 2-bit weight-only quantization.
}
\label{tab:ablation_block_optimization}
\begin{tabular}{lcccccc}
\toprule
Method & ARC-C & ARC-E & Hella. & PIQA & Wino. & Avg. \\
\midrule
Only-linear & 37.46 & 63.97 & 56.54 & 70.13 & 60.30 & 57.68 \\
Block-all & 41.64 & 66.88 & 64.69 & 72.47 & 64.40 & 62.02 \\
\methodname{}, block-wise only & \textbf{44.20} & \textbf{72.18} & \textbf{66.70} & \textbf{76.44} & \textbf{68.19} & \textbf{65.54} \\
\bottomrule
\end{tabular}
\end{table}

The results show that block-level information is useful, but that using it naively is insufficient. Moving from \emph{Only-linear} to \emph{Block-all} improves the average score from 57.68 to 62.02, confirming that reconstruction objectives beyond independent linear layers help reduce accumulated quantization error. However, \emph{Block-all} is still 3.52 points below the staged \methodname{} schedule before any end-to-end scale-only fine-tuning. The gap is consistent across all tasks: staged \methodname{} improves over \emph{Block-all} by 2.56 points on ARC-Challenge, 5.30 on ARC-Easy, 2.01 on HellaSwag, 3.97 on PIQA, and 3.79 on WinoGrande.

\subsection{Validating the local-shift parameterization}
\label{app:ablation_local_shift}

For bit-widths above 2, \methodname{} uses a local-shift parameterization rather than optimizing over all grid values. In particular, for 3-bit quantization the full Gumbel-Softmax relaxation would assign one logit to each of the 8 scalar grid values, whereas our local-shift formulation only considers shifts in $\{-2,-1,0,1,2\}$ around the initialized GPTQ assignment. This reduces the number of logits per weight from 8 to 5 while preserving the most relevant local moves.

To test whether this restriction excludes important assignments, we run an ablation on Llama-3.1-8B-Instruct in which the 3-bit quantizer is optimized using the \emph{full} 8-way Gumbel-Softmax relaxation. After training, we compare the final hard grid index of each weight to its initialized GPTQ grid index and count the absolute index displacement. Table~\ref{tab:local_shift_ablation} reports the resulting distribution across several projection types.
\begin{table}[t]
\centering
\small
\setlength{\tabcolsep}{6pt}
\caption{
Ablation validating the local-shift parameterization.
}
\label{tab:local_shift_ablation}
\begin{tabular}{lcccc}
\toprule
Tensor & $\Delta=0$ & $\Delta=1$ & $\Delta=2$ & $\Delta=3$ \\
\midrule
\texttt{q\_proj}    & 86.261\% & 13.662\% & 0.076\% & 0.00004\% \\
\texttt{k\_proj}    & 84.933\% & 14.957\% & 0.110\% & 0.00005\% \\
\texttt{v\_proj}    & 78.711\% & 21.289\% & 0.000\% & 0.00000\% \\
\texttt{o\_proj}    & 85.660\% & 14.340\% & 0.000\% & 0.00000\% \\
\texttt{gate\_proj} & 82.671\% & 17.329\% & 0.000\% & 0.00000\% \\
\texttt{up\_proj}   & 83.925\% & 16.075\% & 0.000\% & 0.00000\% \\
\texttt{down\_proj} & 86.954\% & 13.046\% & 0.000\% & 0.00000\% \\
\midrule
Total      & 84.635\% & 15.357\% & 0.008\% & 0.000004\% \\
\bottomrule
\end{tabular}
\end{table}
The results strongly support the local-shift design. Across the optimized weights, 84.64\% remain exactly at their initialized grid index and 15.36\% move by only one grid position. Only 0.008\% move by two positions, and 0.000004\% move by three positions. Equivalently, 99.999996\% of all observed full-relaxation assignments lie within the $\{-2,-1,0,1,2\}$ neighborhood used by our local-shift formulation.

%%%%%%%%%%%%%%%%%%%%%%%%%%%%%%%%%%%%%%%%%%%%%%%%%%%%%%%%%%%%

\section*{NeurIPS Paper Checklist}

\begin{enumerate}

\item {\bf Claims}
    \item[] Question: Do the main claims made in the abstract and introduction accurately reflect the paper's contributions and scope?
    \item[] Answer: \answerYes{} % Replace by \answerYes{}, \answerNo{}, or \answerNA{}.
    \item[] Justification: All claims are justified in the paper. The practical claims are shown in the "Experiments" Section.
    \item[] Guidelines:
    \begin{itemize}
        \item The answer \answerNA{} means that the abstract and introduction do not include the claims made in the paper.
        \item The abstract and/or introduction should clearly state the claims made, including the contributions made in the paper and important assumptions and limitations. A \answerNo{} or \answerNA{} answer to this question will not be perceived well by the reviewers. 
        \item The claims made should match theoretical and experimental results, and reflect how much the results can be expected to generalize to other settings. 
        \item It is fine to include aspirational goals as motivation as long as it is clear that these goals are not attained by the paper. 
    \end{itemize}

\item {\bf Limitations}
    \item[] Question: Does the paper discuss the limitations of the work performed by the authors?
    \item[] Answer: \answerYes{} % Replace by \answerYes{}, \answerNo{}, or \answerNA{}.
    \item[] Justification: We discuss the limitations in a Section with the same name in the Appendix.
    \item[] Guidelines:
    \begin{itemize}
        \item The answer \answerNA{} means that the paper has no limitation while the answer \answerNo{} means that the paper has limitations, but those are not discussed in the paper. 
        \item The authors are encouraged to create a separate ``Limitations'' section in their paper.
        \item The paper should point out any strong assumptions and how robust the results are to violations of these assumptions (e.g., independence assumptions, noiseless settings, model well-specification, asymptotic approximations only holding locally). The authors should reflect on how these assumptions might be violated in practice and what the implications would be.
        \item The authors should reflect on the scope of the claims made, e.g., if the approach was only tested on a few datasets or with a few runs. In general, empirical results often depend on implicit assumptions, which should be articulated.
        \item The authors should reflect on the factors that influence the performance of the approach. For example, a facial recognition algorithm may perform poorly when image resolution is low or images are taken in low lighting. Or a speech-to-text system might not be used reliably to provide closed captions for online lectures because it fails to handle technical jargon.
        \item The authors should discuss the computational efficiency of the proposed algorithms and how they scale with dataset size.
        \item If applicable, the authors should discuss possible limitations of their approach to address problems of privacy and fairness.
        \item While the authors might fear that complete honesty about limitations might be used by reviewers as grounds for rejection, a worse outcome might be that reviewers discover limitations that aren't acknowledged in the paper. The authors should use their best judgment and recognize that individual actions in favor of transparency play an important role in developing norms that preserve the integrity of the community. Reviewers will be specifically instructed to not penalize honesty concerning limitations.
    \end{itemize}

\item {\bf Theory assumptions and proofs}
    \item[] Question: For each theoretical result, does the paper provide the full set of assumptions and a complete (and correct) proof?
    \item[] Answer: \answerYes{} % Replace by \answerYes{}, \answerNo{}, or \answerNA{}.
    \item[] Justification: The paper has one theoretical lemma in the Appendix with its proof.
    \item[] Guidelines:
    \begin{itemize}
        \item The answer \answerNA{} means that the paper does not include theoretical results. 
        \item All the theorems, formulas, and proofs in the paper should be numbered and cross-referenced.
        \item All assumptions should be clearly stated or referenced in the statement of any theorems.
        \item The proofs can either appear in the main paper or the supplemental material, but if they appear in the supplemental material, the authors are encouraged to provide a short proof sketch to provide intuition. 
        \item Inversely, any informal proof provided in the core of the paper should be complemented by formal proofs provided in appendix or supplemental material.
        \item Theorems and Lemmas that the proof relies upon should be properly referenced. 
    \end{itemize}

    \item {\bf Experimental result reproducibility}
    \item[] Question: Does the paper fully disclose all the information needed to reproduce the main experimental results of the paper to the extent that it affects the main claims and/or conclusions of the paper (regardless of whether the code and data are provided or not)?
    \item[] Answer: \answerYes{} % Replace by \answerYes{}, \answerNo{}, or \answerNA{}.
    \item[] Justification: We explained all details for our experiments in "Experiment" section and "Additional Experimental Details" in the Appendix.
    \item[] Guidelines:
    \begin{itemize}
        \item The answer \answerNA{} means that the paper does not include experiments.
        \item If the paper includes experiments, a \answerNo{} answer to this question will not be perceived well by the reviewers: Making the paper reproducible is important, regardless of whether the code and data are provided or not.
        \item If the contribution is a dataset and\slash or model, the authors should describe the steps taken to make their results reproducible or verifiable. 
        \item Depending on the contribution, reproducibility can be accomplished in various ways. For example, if the contribution is a novel architecture, describing the architecture fully might suffice, or if the contribution is a specific model and empirical evaluation, it may be necessary to either make it possible for others to replicate the model with the same dataset, or provide access to the model. In general. releasing code and data is often one good way to accomplish this, but reproducibility can also be provided via detailed instructions for how to replicate the results, access to a hosted model (e.g., in the case of a large language model), releasing of a model checkpoint, or other means that are appropriate to the research performed.
        \item While NeurIPS does not require releasing code, the conference does require all submissions to provide some reasonable avenue for reproducibility, which may depend on the nature of the contribution. For example
        \begin{enumerate}
            \item If the contribution is primarily a new algorithm, the paper should make it clear how to reproduce that algorithm.
            \item If the contribution is primarily a new model architecture, the paper should describe the architecture clearly and fully.
            \item If the contribution is a new model (e.g., a large language model), then there should either be a way to access this model for reproducing the results or a way to reproduce the model (e.g., with an open-source dataset or instructions for how to construct the dataset).
            \item We recognize that reproducibility may be tricky in some cases, in which case authors are welcome to describe the particular way they provide for reproducibility. In the case of closed-source models, it may be that access to the model is limited in some way (e.g., to registered users), but it should be possible for other researchers to have some path to reproducing or verifying the results.
        \end{enumerate}
    \end{itemize}

\item {\bf Open access to data and code}
    \item[] Question: Does the paper provide open access to the data and code, with sufficient instructions to faithfully reproduce the main experimental results, as described in supplemental material?
    \item[] Answer: \answerYes{} % Replace by \answerYes{}, \answerNo{}, or \answerNA{}.
    \item[] Justification: We provided an anonymous code.
    \item[] Guidelines:
    \begin{itemize}
        \item The answer \answerNA{} means that paper does not include experiments requiring code.
        \item Please see the NeurIPS code and data submission guidelines (\url{https://neurips.cc/public/guides/CodeSubmissionPolicy}) for more details.
        \item While we encourage the release of code and data, we understand that this might not be possible, so \answerNo{} is an acceptable answer. Papers cannot be rejected simply for not including code, unless this is central to the contribution (e.g., for a new open-source benchmark).
        \item The instructions should contain the exact command and environment needed to run to reproduce the results. See the NeurIPS code and data submission guidelines (\url{https://neurips.cc/public/guides/CodeSubmissionPolicy}) for more details.
        \item The authors should provide instructions on data access and preparation, including how to access the raw data, preprocessed data, intermediate data, and generated data, etc.
        \item The authors should provide scripts to reproduce all experimental results for the new proposed method and baselines. If only a subset of experiments are reproducible, they should state which ones are omitted from the script and why.
        \item At submission time, to preserve anonymity, the authors should release anonymized versions (if applicable).
        \item Providing as much information as possible in supplemental material (appended to the paper) is recommended, but including URLs to data and code is permitted.
    \end{itemize}

\item {\bf Experimental setting/details}
    \item[] Question: Does the paper specify all the training and test details (e.g., data splits, hyperparameters, how they were chosen, type of optimizer) necessary to understand the results?
    \item[] Answer: \answerYes{} % Replace by \answerYes{}, \answerNo{}, or \answerNA{}.
    \item[] Justification: All details are presented in "Experiments" and "Additional Experimental Details" Sections.
    \item[] Guidelines:
    \begin{itemize}
        \item The answer \answerNA{} means that the paper does not include experiments.
        \item The experimental setting should be presented in the core of the paper to a level of detail that is necessary to appreciate the results and make sense of them.
        \item The full details can be provided either with the code, in appendix, or as supplemental material.
    \end{itemize}

\item {\bf Experiment statistical significance}
    \item[] Question: Does the paper report error bars suitably and correctly defined or other appropriate information about the statistical significance of the experiments?
    \item[] Answer: \answerYes{} % Replace by \answerYes{}, \answerNo{}, or \answerNA{}.
    \item[] Justification: For the benchmarks with a high variance we repeat the experiments multiple times and report the mean of results.
    \item[] Guidelines:
    \begin{itemize}
        \item The answer \answerNA{} means that the paper does not include experiments.
        \item The authors should answer \answerYes{} if the results are accompanied by error bars, confidence intervals, or statistical significance tests, at least for the experiments that support the main claims of the paper.
        \item The factors of variability that the error bars are capturing should be clearly stated (for example, train/test split, initialization, random drawing of some parameter, or overall run with given experimental conditions).
        \item The method for calculating the error bars should be explained (closed form formula, call to a library function, bootstrap, etc.)
        \item The assumptions made should be given (e.g., Normally distributed errors).
        \item It should be clear whether the error bar is the standard deviation or the standard error of the mean.
        \item It is OK to report 1-sigma error bars, but one should state it. The authors should preferably report a 2-sigma error bar than state that they have a 96\% CI, if the hypothesis of Normality of errors is not verified.
        \item For asymmetric distributions, the authors should be careful not to show in tables or figures symmetric error bars that would yield results that are out of range (e.g., negative error rates).
        \item If error bars are reported in tables or plots, the authors should explain in the text how they were calculated and reference the corresponding figures or tables in the text.
    \end{itemize}

\item {\bf Experiments compute resources}
    \item[] Question: For each experiment, does the paper provide sufficient information on the computer resources (type of compute workers, memory, time of execution) needed to reproduce the experiments?
    \item[] Answer: \answerYes{} % Replace by \answerYes{}, \answerNo{}, or \answerNA{}.
    \item[] Justification: Our accuracy results are independent of the GPU type but for speedup results we provide details about GPU type and configurations in "Experiments" Section.
    \item[] Guidelines:
    \begin{itemize}
        \item The answer \answerNA{} means that the paper does not include experiments.
        \item The paper should indicate the type of compute workers CPU or GPU, internal cluster, or cloud provider, including relevant memory and storage.
        \item The paper should provide the amount of compute required for each of the individual experimental runs as well as estimate the total compute. 
        \item The paper should disclose whether the full research project required more compute than the experiments reported in the paper (e.g., preliminary or failed experiments that didn't make it into the paper). 
    \end{itemize}
    
\item {\bf Code of ethics}
    \item[] Question: Does the research conducted in the paper conform, in every respect, with the NeurIPS Code of Ethics \url{https://neurips.cc/public/EthicsGuidelines}?
    \item[] Answer: \answerYes{} % Replace by \answerYes{}, \answerNo{}, or \answerNA{}.
    \item[] Justification: We preserve anonymity in both text and codebase and follow the code of ethics.
    \item[] Guidelines:
    \begin{itemize}
        \item The answer \answerNA{} means that the authors have not reviewed the NeurIPS Code of Ethics.
        \item If the authors answer \answerNo, they should explain the special circumstances that require a deviation from the Code of Ethics.
        \item The authors should make sure to preserve anonymity (e.g., if there is a special consideration due to laws or regulations in their jurisdiction).
    \end{itemize}

\item {\bf Broader impacts}
    \item[] Question: Does the paper discuss both potential positive societal impacts and negative societal impacts of the work performed?
    \item[] Answer: \answerNA{} % Replace by \answerYes{}, \answerNo{}, or \answerNA{}.
    \item[] Justification: No societal impact of the work performed.
    \item[] Guidelines:
    \begin{itemize}
        \item The answer \answerNA{} means that there is no societal impact of the work performed.
        \item If the authors answer \answerNA{} or \answerNo, they should explain why their work has no societal impact or why the paper does not address societal impact.
        \item Examples of negative societal impacts include potential malicious or unintended uses (e.g., disinformation, generating fake profiles, surveillance), fairness considerations (e.g., deployment of technologies that could make decisions that unfairly impact specific groups), privacy considerations, and security considerations.
        \item The conference expects that many papers will be foundational research and not tied to particular applications, let alone deployments. However, if there is a direct path to any negative applications, the authors should point it out. For example, it is legitimate to point out that an improvement in the quality of generative models could be used to generate Deepfakes for disinformation. On the other hand, it is not needed to point out that a generic algorithm for optimizing neural networks could enable people to train models that generate Deepfakes faster.
        \item The authors should consider possible harms that could arise when the technology is being used as intended and functioning correctly, harms that could arise when the technology is being used as intended but gives incorrect results, and harms following from (intentional or unintentional) misuse of the technology.
        \item If there are negative societal impacts, the authors could also discuss possible mitigation strategies (e.g., gated release of models, providing defenses in addition to attacks, mechanisms for monitoring misuse, mechanisms to monitor how a system learns from feedback over time, improving the efficiency and accessibility of ML).
    \end{itemize}
    
\item {\bf Safeguards}
    \item[] Question: Does the paper describe safeguards that have been put in place for responsible release of data or models that have a high risk for misuse (e.g., pre-trained language models, image generators, or scraped datasets)?
    \item[] Answer: \answerNA{} % Replace by \answerYes{}, \answerNo{}, or \answerNA{}.
    \item[] Justification: The paper poses no such risks.
    \item[] Guidelines:
    \begin{itemize}
        \item The answer \answerNA{} means that the paper poses no such risks.
        \item Released models that have a high risk for misuse or dual-use should be released with necessary safeguards to allow for controlled use of the model, for example by requiring that users adhere to usage guidelines or restrictions to access the model or implementing safety filters. 
        \item Datasets that have been scraped from the Internet could pose safety risks. The authors should describe how they avoided releasing unsafe images.
        \item We recognize that providing effective safeguards is challenging, and many papers do not require this, but we encourage authors to take this into account and make a best faith effort.
    \end{itemize}

\item {\bf Licenses for existing assets}
    \item[] Question: Are the creators or original owners of assets (e.g., code, data, models), used in the paper, properly credited and are the license and terms of use explicitly mentioned and properly respected?
    \item[] Answer: \answerYes{} % Replace by \answerYes{}, \answerNo{}, or \answerNA{}.
    \item[] Justification: We use open-source models and datasets in the paper and cited all of them.
    \item[] Guidelines:
    \begin{itemize}
        \item The answer \answerNA{} means that the paper does not use existing assets.
        \item The authors should cite the original paper that produced the code package or dataset.
        \item The authors should state which version of the asset is used and, if possible, include a URL.
        \item The name of the license (e.g., CC-BY 4.0) should be included for each asset.
        \item For scraped data from a particular source (e.g., website), the copyright and terms of service of that source should be provided.
        \item If assets are released, the license, copyright information, and terms of use in the package should be provided. For popular datasets, \url{paperswithcode.com/datasets} has curated licenses for some datasets. Their licensing guide can help determine the license of a dataset.
        \item For existing datasets that are re-packaged, both the original license and the license of the derived asset (if it has changed) should be provided.
        \item If this information is not available online, the authors are encouraged to reach out to the asset's creators.
    \end{itemize}

\item {\bf New assets}
    \item[] Question: Are new assets introduced in the paper well documented and is the documentation provided alongside the assets?
    \item[] Answer: \answerNA{} % Replace by \answerYes{}, \answerNo{}, or \answerNA{}.
    \item[] Justification: The paper does not release new assets.
    \item[] Guidelines:
    \begin{itemize}
        \item The answer \answerNA{} means that the paper does not release new assets.
        \item Researchers should communicate the details of the dataset\slash code\slash model as part of their submissions via structured templates. This includes details about training, license, limitations, etc. 
        \item The paper should discuss whether and how consent was obtained from people whose asset is used.
        \item At submission time, remember to anonymize your assets (if applicable). You can either create an anonymized URL or include an anonymized zip file.
    \end{itemize}

\item {\bf Crowdsourcing and research with human subjects}
    \item[] Question: For crowdsourcing experiments and research with human subjects, does the paper include the full text of instructions given to participants and screenshots, if applicable, as well as details about compensation (if any)? 
    \item[] Answer: \answerNA{} % Replace by \answerYes{}, \answerNo{}, or \answerNA{}.
    \item[] Justification: The paper does not involve crowdsourcing nor research with human subjects.
    \item[] Guidelines:
    \begin{itemize}
        \item The answer \answerNA{} means that the paper does not involve crowdsourcing nor research with human subjects.
        \item Including this information in the supplemental material is fine, but if the main contribution of the paper involves human subjects, then as much detail as possible should be included in the main paper. 
        \item According to the NeurIPS Code of Ethics, workers involved in data collection, curation, or other labor should be paid at least the minimum wage in the country of the data collector. 
    \end{itemize}

\item {\bf Institutional review board (IRB) approvals or equivalent for research with human subjects}
    \item[] Question: Does the paper describe potential risks incurred by study participants, whether such risks were disclosed to the subjects, and whether Institutional Review Board (IRB) approvals (or an equivalent approval/review based on the requirements of your country or institution) were obtained?
    \item[] Answer: \answerNA{} % Replace by \answerYes{}, \answerNo{}, or \answerNA{}.
    \item[] Justification: The paper does not involve research with human subjects.
    \item[] Guidelines:
    \begin{itemize}
        \item The answer \answerNA{} means that the paper does not involve crowdsourcing nor research with human subjects.
        \item Depending on the country in which research is conducted, IRB approval (or equivalent) may be required for any human subjects research. If you obtained IRB approval, you should clearly state this in the paper. 
        \item We recognize that the procedures for this may vary significantly between institutions and locations, and we expect authors to adhere to the NeurIPS Code of Ethics and the guidelines for their institution. 
        \item For initial submissions, do not include any information that would break anonymity (if applicable), such as the institution conducting the review.
    \end{itemize}

\item {\bf Declaration of LLM usage}
    \item[] Question: Does the paper describe the usage of LLMs if it is an important, original, or non-standard component of the core methods in this research? Note that if the LLM is used only for writing, editing, or formatting purposes and does \emph{not} impact the core methodology, scientific rigor, or originality of the research, declaration is not required.
    %this research? 
    \item[] Answer: \answerNA{} % Replace by \answerYes{}, \answerNo{}, or \answerNA{}.
    \item[] Justification: The core method development in this research does not involve LLMs as any important, original, or non-standard components.
    \item[] Guidelines:
    \begin{itemize}
        \item The answer \answerNA{} means that the core method development in this research does not involve LLMs as any important, original, or non-standard components.
        \item Please refer to our LLM policy in the NeurIPS handbook for what should or should not be described.
    \end{itemize}

\end{enumerate}

\end{document}